\newtheorem{theorem}{Theorem}[section]
\newtheorem{lemma}[theorem]{Lemma}
\newtheorem{claim}[theorem]{Claim}
\newtheorem{corollary}[theorem]{Corollary}
\newtheorem{proposition}[theorem]{Proposition}
\numberwithin{equation}{section}
\newcommand{\uline}[1]{\underline{#1}}
\newcommand{\norm}[1]{\left|\left|#1\right|\right|}
\newcommand{\lr}[1]{\left(#1\right)}
\newcommand{\abs}[1]{\left|#1\right|}
\newcommand{\set}[1]{\left\{#1\right\}}
\newcommand{\R}{\mathbb R}
\newcommand{\gives}{\rightarrow}
\newcommand{\mD}{\mathcal D}
\newcommand{\mL}{\mathcal L}
\newcommand{\mI}{\mathcal I}
\newcommand{\rel}[1]{\left[#1\right]_+}
\newcommand{\RR}{\mathrm{RidgelessReLU}(\mD)}
\renewcommand{\sin}[1]{s_{\mathrm{in}}(x_{#1})}
\newcommand{\sout}[1]{s_{\mathrm{out}}(x_{#1})}
\newcommand{\PL}{\mathrm{PL}}
\newcommand{\PLD}{\mathrm{PL(\mD)}}
\newcommand{\TV}[1]{\norm{#1}_{TV}}
\newcommand{\I}[2]{(x_{#1},x_{#2})}
\newcommand{\sigin}{\sigma_{\mathrm{in}}}
\newcommand{\sigout}{\sigma_{\mathrm{out}}}
\newcommand{\sigstar}{\sigma_{*}}
\title{Ridgeless Interpolation with Shallow ReLU Networks in $1D$ is Nearest Neighbor Curvature Extrapolation and Provably Generalizes on Lipschitz Functions}
\begin{document}

\maketitle
\begin{center}
\author{Boris Hanin\footnote{BH gratefully acknowledges support from NSF grants DMS -- 1855684 and DMS -- 2133806 as well as from an ONR MURI on Foundations of Deep Learning}\\ Department of Operations Research and Financial Engineering\\ Princeton University}
\end{center}

\begin{abstract}
    We prove a precise geometric description of all one layer ReLU networks $z(x;\theta)$ with a single linear unit and input/output dimensions equal to one that interpolate a given dataset $\mD=\set{(x_i,f(x_i))}$ and, among all such interpolants, minimize the $\ell_2$-norm of the neuron weights. Such networks can intuitively be thought of as those that minimize the mean-squared error over $\mD$ plus an infinitesimal weight decay penalty. We therefore refer to them as ridgeless ReLU interpolants. Our description proves that, to extrapolate values $z(x;\theta)$ for inputs $x\in (x_i,x_{i+1})$ lying between two consecutive datapoints, a ridgeless ReLU interpolant simply compares the signs of the discrete estimates for the curvature of $f$ at $x_i$ and $x_{i+1}$ derived from the dataset $\mD$. If the curvature estimates at $x_i$ and $x_{i+1}$ have different signs, then $z(x;\theta)$ must be linear on $(x_i,x_{i+1})$. If in contrast the curvature estimates at $x_i$ and $x_{i+1}$ are both positive (resp. negative), then $z(x;\theta)$ is convex (resp. concave) on  $(x_i,x_{i+1})$. Our results show that ridgeless ReLU interpolants achieve the best possible generalization for learning $1d$ Lipschitz functions, up to universal constants. 
\end{abstract}
\section{Introduction}
The ability of overparameterized neural networks to simultaneously fit data (i.e. interpolate) and generalize to unseen data (i.e. extrapolate) is a robust empirical finding that spans the use of deep learning in tasks from computer vision \cite{krizhevsky2012imagenet,he2016deep}, natural language processing \cite{brown2020language}, and reinforcement learning \cite{silver2016mastering,vinyals2019alphastar,jumper2021highly}. This observation is surprising when viewed from the lens of traditional learning theory \cite{vapnik1971uniform,bartlett2002rademacher}, which advocates for capacity control of model classes and strong regularization to avoid overfitting. 

Part of the difficulty in explaining conceptually why neural networks are able to generalize is that it is unclear how to understand, concretely in terms of the network function, various forms of implicit and explicit regularization used in practice. For example, a well-chosen initialization for gradient-based optimizers is key to ensuring good generalization properties of the resulting learned network \cite{mishkin2015all,he2015delving, xiao2018dynamical}. However, the specific geometric or analytic properties of the learned network ensured by a successful initialization scheme are hard to pin down. 

In a similar vein, it is standard practice to experiment with explicit regularizers such as weight decay, obtained by adding an $\ell_2$ penalty on model parameters to the underlying empirical risk. While weight decay is easy to describe via its effect on parameters, it is typically challenging to translate this into properties of a learned non-linear model. In the simple setting of one layer ReLU networks there has been some relatively recent progress in this direction. Specifically, starting with an observation in \cite{neyshabur2014search} the articles \cite{savarese2019infinite, ongie2019function, parhi2020banach, parhi2020neural, parhi2021kinds} explore and develop the fact that $\ell_2$ regularization on parameters in this setting is provably equivalent to penalizing the total variation of the derivative of the network function (cf eg Theorem \ref{T:prior} from prior work below). While the results in these articles hold for any input dimension, in this article we consider the simplest case of input dimension $1$. In this setting, our main contributions are:\\
\begin{enumerate}
    \item Given a dataset $\mD = \set{(x_i,y_i)}$ with scalar inputs and outputs, we obtain a complete characterization of all one layer ReLU networks with a single linear unit which fit the data and, among all such interpolating networks, do so with the minimal $\ell_2$ norm of the neuron weights. There are infinitely many such networks and, unlike in prior work, our characterization is phrased directly in terms of the behavior of the network function on intervals $(x_i,x_{i+1})$ between consecutive datapoints. Our description is purely geometric and can be summarized informally as follows (see Theorem \ref{T:interpolants} for the precise statement):\\
    \begin{itemize}
        \item If we order $x_1<\cdots<x_m$, then the data itself gives a discrete curvature estimate 
        \[
        \epsilon_i := \mathrm{sgn}\lr{s_i-s_{i-1}},\qquad s_i:=\frac{y_{i+1}-y_i}{x_{i+1}-x_i}
        \]
        at $x_i$ of whatever function generated the data. See Figure \ref{fig:informal}. \\
        \item If the curvature estimates $\epsilon_i$ and $\epsilon_{i+1}$ at $x_i$ and $x_{i+1}$ disagree, then the network must be linear on $(x_i,x_{i+1})$. See Figures \ref{fig:alg1} and \ref{fig:alg2}.\\
        \item If the curvature estimates agree and are positive (resp. negative), then the network function is convex (resp. concave) on $(x_i,x_{i+1})$ and lies below (resp. above) the straight line interpolant of the data. See Figures \ref{fig:alg1} and \ref{fig:alg2}.\\
    \end{itemize}
    \item The geometric description of the space of interpolants of $\mD$ from (1)  immediately yields sharp generalization bounds for learning $1d$ Lipschitz functions. This is stated in Corollary \ref{C:lip-gen}. Specifically, if the dataset $\mD$ is generated by setting $y_i=f_*(x_i)$ for $f_*:\R\gives \R$ a Lipschitz function, then any one layer ReLU network with a single linear unit which interpolates $\mD$ but does so with minimal $\ell_2$-norm of the network parameters will generalize as well as possible to unseen data, up to a universal multiplicative constant. To the author's knowledge this is the first time such generalization guarantees have been obtained.\\
\end{enumerate}
\begin{figure}
    \centering
    \includegraphics[width=.9\textwidth]{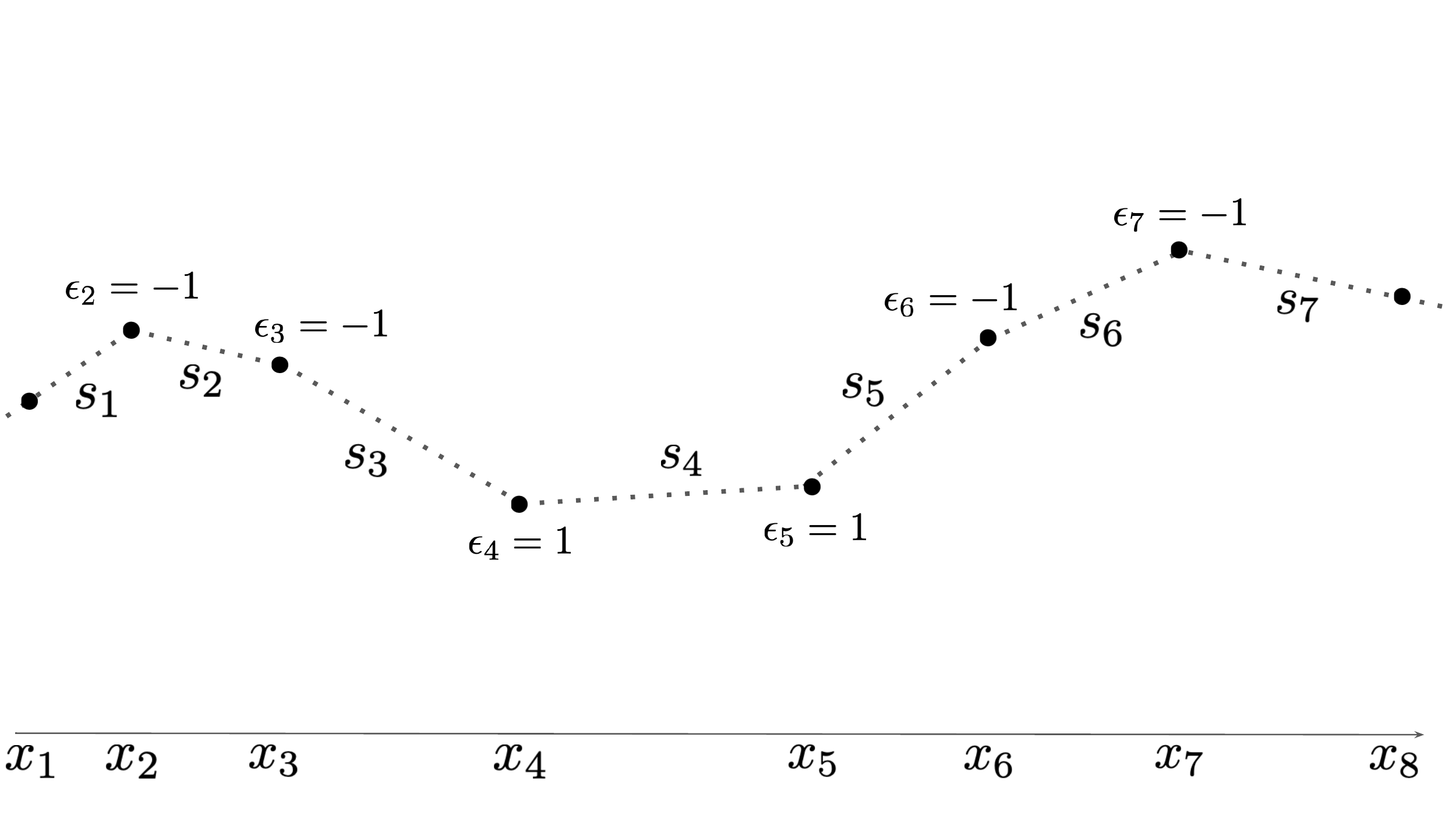}
    \caption{A dataset $\mD$ with $m=8$ points. Shown are the ``connect the dots'' interpolant $f_{\mD}$ (dashed line), its slopes $s_i$ and the ``discrete curvature'' $\epsilon_i$ at each $x_i$.}
    \label{fig:informal}
\end{figure}
\subsection{Setup and Informal Statement of Results}
\noindent Consider a one layer ReLU network
\begin{equation}\label{E:z-def}
z(x)=z(x;\theta):=ax + b+\sum_{j=1}^{n} W_j^{(2)}\rel{W_j^{(1)}x+b_i^{(1)}},\quad [t]_+:=\mathrm{ReLU}(t)=\max\set{0,t}    
\end{equation}
with a single linear unit\footnote{The presence of the linear term $ax+b$ is not really standard in practice but is adopted in keeping with prior work \cite{savarese2019infinite,ongie2019function,parhi2020banach} since it leads a cleaner mathematical formulation of results.} and input/output dimensions equal to one. For a given dataset
\[
\mD = \set{\lr{x_i,y_i},\, i=1,\ldots,m},\qquad -\infty<x_1<\cdots<x_m<\infty,\quad y_i\in \R,
\]
if the number of datapoints $m$ is smaller than the network width $n$, there are infinitely many choices of the parameter vector $\theta$ for which $z(x;\theta)$ interpolates (i.e. fits) the data:
\begin{equation}\label{E:interp}
z(x_i;\theta)=y_i,\qquad \forall\, i=1,\ldots,m.    
\end{equation}
Without further information about how $\theta$ was selected, little can be said about the  function $x\mapsto z(x;\theta)$ on intervals $(x_i,x_{i+1})$ between two consecutive datapoints when $n$ is much larger than $m$. This precludes useful generalization guarantees that hold uniformly over all $\theta$ subject only to the interpolation condition \eqref{E:interp}. 

In practice interpolants are not chosen arbitrary. Instead, they are typically learned by some variant of gradient descent starting from a random initialization. For a given network architecture, initialization scheme, optimizer, data augmentation scheme,  regularizer, and so on, understanding how the learned network uses the known labels $\set{y_i,\, i=1,\ldots,m}$ to extrapolate values of $z(x;\theta)$ for $x$ in intervals  $(x_i,x_{i+1})$ away from the datapoints in $\mD$ is an important open problem. To obtain non-trivial generalization estimates and make progress on this problem, a fruitful line of inquiry in prior work has been to search for additional complexity measures based on margins \cite{wei2018margin}, PAC-Bayes estimates \cite{dziugaite2017computing,dziugaite2018data,nagarajan2019deterministic}, weight matrix norms \cite{neyshabur2015norm, bartlett2017spectrally}, information theoretic compression estimates \cite{arora2018stronger}, Rachemacher complexity \cite{golowich18size}, etc that, while perhaps not explicitly regularized, are hopefully small in trained networks. The idea is then that these complexity measures being small gives additional constrains on the capacity of the space of learned networks. We refer the interested reader to \cite{jiang2019fantastic} for a review and empirically comparison of many such approaches.

In this article, we take a different approach to studying generalization. We do not seek general results that are valid for any network architecture. Instead, our goal is to describe completely, in concrete geometrical terms, the properties of one layer ReLU networks $z(x;\theta)$ that interpolate a dataset $\mD$ in the sense of \eqref{E:interp} with the minimal possible $\ell_2$ penalty
\[
C(\theta)=C(\theta,n)=\frac{1}{2}\sum_{j=1}^{n}\lr{\abs{W_j^{(1)}}^2 + \abs{W_j^{(2)}}^2}
\]
on the neuron weights. More precisely, we study the space of ridgeless ReLU interpolants 
\begin{equation}\label{E:RR-def}
\mathrm{RidgelessReLU}(\mD):=\set{z(x;\theta)\quad|\quad z(x_i;\theta)=y_i\,\,\,\, \forall (x_i,y_i)\in \mD,\quad C(\theta)=C_*},
\end{equation}
of a dataset $\mD$, where
\[
C_* := \inf_{\theta,n} \set{C(\theta,n)~|~z(x_i;n,\theta)=y_i\,\,\,\,\forall (x_i,y_i)\in \mD}.
\]
The elements of $\RR$ can intuitively be thought of as all ReLU networks that minimize a weakly penalized loss
\begin{equation}\label{E:loss-reg}
\mL(\theta;\mD)+\lambda C(\theta),\qquad\lambda \ll 1,    
\end{equation}
where $\mL$ is an empirical loss, such as the mean squared error over $\mD$, and the strength $\lambda$ of the weight decay penalty $C(\theta)$ is infinitesimal. It it plausible but by no means obvious that, with high probability, gradient descent from a random initialization and a weight decay penalty whose strength decreases to zero over training converges to an element in $\RR$. This article does not study optimization, and we therefore leave this as an interesting open problem. Our main result is simple description of $\RR$ and can informally be stated as follows:
\begin{theorem}[Informal Statement of Theorem \ref{T:interpolants}]\label{T:interpolants-informal}
Fix a dataset $\mD = \set{(x_i,y_i),\, i=1,\ldots,m}$. Each datapoint $(x_i,y_i)$ gives an estimate
\[
\epsilon_i := \mathrm{sgn}\lr{s_i-s_{i-1}},\qquad s_i:=\frac{y_{i+1}-y_{i}}{x_{i+1}-x_i}
\]
for the local curvature of the data (Figure \ref{fig:informal}). Among all continuous and piecewise linear functions $f$ that fit $\mD$ exactly, the ones in $\RR$ are precisely those that:\\
\begin{itemize}
    \item Are convex (resp. concave) on intervals $(x_i,x_{i+1})$ at which neighboring datapoints agree on the local curvature in the sense that $\epsilon_i=\epsilon_{i+1}=1$ (resp. $\epsilon_{i}=\epsilon_{i+1}=-1)$. On such intervals $f$ lies below (resp. above) the straight line interpolant of the data. See Figures \ref{fig:alg1} and \ref{fig:alg2}.\\
    \item Are linear (or more precisely affine) on intervals $(x_{i},x_{i+1})$ when neighboring datapoints disagree on the local curvature in the sense that $\epsilon_i\cdot \epsilon_{i+1}\neq 1$. \\
\end{itemize}
\end{theorem}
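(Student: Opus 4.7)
The plan is to first invoke Theorem \ref{T:prior} to pass from the network-parameter objective $C(\theta)$ to a function-space objective: that prior result identifies $C_*$ with the minimum of $\TV{f''}$ over continuous piecewise linear (CPL) interpolants $f$ of $\mD$, and identifies elements of $\RR$ with (the realizations of) the minimizing $f$'s. Here $f''$ denotes the distributional second derivative, a signed Radon measure supported at the breakpoints of $f$. Since any mass of $f''$ outside $[x_1,x_m]$ can be stripped off by extending the end slopes linearly (which does not affect interpolation and only decreases the TV), it suffices to characterize the CPL minimizers on $[x_1,x_m]$.

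The core step is a convex-duality argument driven by a single canonical test function. For each interior index $i\in\{2,\ldots,m-1\}$, let $\phi_i$ be the continuous piecewise linear ``tent'' of height one at $x_i$, supported in $[x_{i-1},x_{i+1}]$ and satisfying $\phi_i(x_j)=\delta_{ij}$. Integration by parts yields
\[
\int \phi_i \, df'' \;=\; \int f \, d\phi_i'' \;=\; \frac{y_{i-1}-y_i}{x_i-x_{i-1}} + \frac{y_{i+1}-y_i}{x_{i+1}-x_i} \;=\; s_i - s_{i-1}.
\]
Setting $\phi := \sum_{i=2}^{m-1} \epsilon_i \phi_i$ produces the continuous piecewise linear function on $[x_1,x_m]$ with heights $\phi(x_i)=\epsilon_i\in\{-1,0,+1\}$ (with the convention $\epsilon_1=\epsilon_m:=0$), so $\|\phi\|_\infty \leq 1$ and $\int \phi \, df'' = \sum_i \epsilon_i(s_i-s_{i-1}) = \sum_i |s_i-s_{i-1}|$. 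The $L^\infty$--TV duality bound $|\int\phi\,df''|\leq\|\phi\|_\infty\TV{f''}$ then yields $\TV{f''}\geq\sum_i|s_i-s_{i-1}|$; the connect-the-dots interpolant $f_{\mD}$ saturates this, so the minimum equals $\sum_i|s_i-s_{i-1}|$.

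To characterize $\RR$, I read off the equality case of the duality bound: a minimizer $f$ must satisfy $\phi\equiv +1$ on the positive part of $f''$ and $\phi\equiv -1$ on its negative part (Jordan decomposition), so in particular $f''$ is supported in $\{|\phi|=1\}$. Because $\phi$ is linear on each $(x_i,x_{i+1})$ between the heights $\epsilon_i$ and $\epsilon_{i+1}$, this gives exactly the three cases in the statement. If $\epsilon_i=\epsilon_{i+1}=+1$, then $\phi\equiv 1$ on $[x_i,x_{i+1}]$, forcing $f''\geq 0$ there: $f$ is convex on this interval and therefore lies below the chord. If $\epsilon_i=\epsilon_{i+1}=-1$, the symmetric argument makes $f$ concave and above the chord. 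If $\epsilon_i\epsilon_{i+1}\neq 1$, then $|\phi|<1$ strictly on the open interval $(x_i,x_{i+1})$, so $f''$ vanishes there and $f$ is affine (this also handles the two boundary intervals via the convention above). The converse direction --- that any CPL interpolant satisfying these geometric conditions achieves equality --- is verified by direct accounting of the signed kinks of $f'$, which telescope to exactly $\sum_i |s_i-s_{i-1}|$.

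The main obstacle I anticipate is the reduction step from $C(\theta)$ to $\TV{f''}$: one must combine the AM--GM / rescaling identification of $\frac{1}{2}(|W_j^{(1)}|^2+|W_j^{(2)}|^2)$ with $|W_j^{(1)}W_j^{(2)}|$, keep track of the unpenalized linear part $ax+b$, and take an infimum over network widths $n$; this is where invoking Theorem \ref{T:prior} saves the most work. A mild measure-theoretic subtlety --- ensuring no atom of $f''$ sits at an interior $x_i$ with $\epsilon_i=0$ (where $\phi(x_i)=0\neq\pm 1$) --- is automatic from the support condition $\{|\phi|=1\}$. Given these pieces, the entire geometric description of $\RR$ falls out of reading off the single canonical test function $\phi=\sum_i\epsilon_i\phi_i$ interval by interval.
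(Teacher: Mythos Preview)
Your duality argument is correct and is a genuinely different route from the paper's proof. The paper never introduces a dual test function; instead it proceeds by a long chain of local ``surgery'' contradictions. After invoking Theorem~\ref{T:prior}, the paper first shows (Proposition~\ref{P:monotone}) that $Df$ must be monotone on each $(x_i,x_{i+1})$ by straightening any non-monotone triple of slopes, then bootstraps this through Propositions~\ref{P:neighbors}, \ref{P:ends}, \ref{P:through-i}, \ref{P:i-to-i+1}, \ref{P:i-to-i+1-op} --- each one a case analysis in which a hypothetical violation is repaired by a competitor $g\in\PLD$ with strictly smaller $\TV{Dg}$. The converse (Proposition~\ref{P:one-way}) is proved by the same telescoping you sketch. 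Your approach replaces all of the forward-direction surgeries with the single identity $\int\phi\,df''=\sum_i|s_i-s_{i-1}|$ and the equality case of $|\int\phi\,d\mu|\le\|\phi\|_\infty\|\mu\|_{TV}$; this is shorter, more conceptual, and makes the role of the discrete curvatures $\epsilon_i$ transparent as the heights of a dual certificate. The paper's approach, by contrast, is more elementary (no measure-theoretic duality, only piecewise-linear comparisons) and yields the finer intermediate statements like Lemma~\ref{L:epsilon-slopes} as byproducts.

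One point to tighten: your equality-case analysis actually gives sign constraints on the atoms of $f''$ at the data points $x_i$ themselves (since $\phi(x_i)=\epsilon_i$), not merely convexity/concavity on the \emph{open} intervals. These atom constraints are exactly the ``support line'' bounds $f\ge\max\{\ell_{i-1},\ell_{i+q}\}$ in property~(2) of the formal Theorem~\ref{T:interpolants}, and without them the converse fails --- an interpolant can be convex on $(x_i,x_{i+1})$ yet have a large negative kink at $x_i$, inflating $\TV{Df}$. So when you write ``any CPL interpolant satisfying these geometric conditions,'' make explicit that ``these conditions'' are your duality conditions (equivalently the full conditions of Theorem~\ref{T:interpolants}), not the looser open-interval phrasing of the informal statement.
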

Before giving a precise statement our results, we mention that, as described in detail below, the space $\RR$ has been considered in a number of prior articles \cite{savarese2019infinite,ongie2019function,parhi2020banach}. Our starting point will be the useful but abstract characterization of $\RR$ they obtained in terms of the total variation of the derivative of $z(x;\theta)$ (see \eqref{E:rr-old}). 

Let us also note that the conclusions of Theorem \ref{T:interpolants-informal} (and Theorem \ref{T:interpolants}) also hold under seemingly very different hypotheses from ours. Namely, instead of $\ell_2$-regularization on the parameters, \cite{blanc2020implicit} considers SGD training for mean squared error with iid noise added to labels. Their Theorem 2 shows (modulo some assumptions about interpreting the derivative of the ReLU) that, among all ReLU networks a linear unit that interpolate a dataset $\mD$, the only ones that minimize the implicit regularization induced by adding iid noise to SGD are precisely those that satisfy the conclusions of Theorem \ref{T:interpolants-informal} and hence are exactly the networks in $\RR$. This suggests that our results hold under much more general conditions. It would be interesting to characterize them.

Further, our characterization of $\RR$ in Theorem \ref{T:interpolants} immediately implies strong generalization guarantees uniformly over $\RR$. We give a representative example in Corollary \ref{C:lip-gen}, which shows that such ReLU networks achieve the best possible generalization error of Lipschitz functions, up to constants. 

Finally, note that we allow networks $z(x;\theta)$ of any width but that if the width $n$ is too small relative to the dataset size $m$, then the interpolation condition \eqref{E:interp} cannot be satisfied. Also, we point out that in our formulation of the cost $C(\theta)$ we have left both the linear term $ax+b$ and the neuron biases  unregularized. This is not standard practice but seems to yield the cleanest results.

\subsection{Statement of Results and Relation to Prior Work}
Every ReLU network $z(x;\theta)$ is a continuous and piecewise linear function from $\R$ to $\R$ with a finite number of affine pieces. Let us denote by $\PL$ the space of all such functions and define
\[
\PLD:=\set{f\in \PL|~ f(x_i)=y_i\,\, \forall i=1,\ldots, m }
\]
to be the space of piecewise linear interpolants of $\mD$. Perhaps the most natural element in $\PLD$ is the ``connect-the-dots interpolant'' $f_{\mD}:\R\gives \R$ given by
\[
f_{\mD}(x) := \begin{cases}
\ell_1(x),&\quad x < x_2\\
\ell_i(x),&\quad x_i<x<x_{i+1},\quad i=2,\ldots,m-2\\
\ell_{m-1}(x),&\quad x > x_{m-1}
\end{cases},
\]
where for $i=1,\ldots, m-1$, we've set
\[
\ell_i(x):=(x-x_i)s_i + y_i,\qquad s_i:=\frac{y_{i+1}-y_i}{x_{i+1}-x_i}.
\]
See Figure \ref{fig:informal}. In addition to $f_{\mD}$, there are many other elements in $\RR$. Theorem \ref{T:interpolants} gives a complete description of all of them phrased in terms of how they may behave on intervals $(x_i,x_{i+1})$ between consecutive datapoints. Our description is based on the signs
\[
\epsilon_i = \mathrm{sgn}\lr{s_{i}-s_{i-1}},\qquad 2\leq i \leq m
\]
of the (discrete) second derivatives of $f_{\mD}$ at the inputs $x_i$ from our dataset.

\begin{theorem}\label{T:interpolants}
The space $\mathrm{RidgelessReLU}(\mD)$ consists of those $f\in \mathrm{PL(\mD)}$ satisfying:\\
\begin{enumerate}
    \item $f$ coincides with $f_{\mD}$ on the following intervals:\\
    \begin{itemize}
        \item[(1a)] Near infinity, i.e. on the intervals $(-\infty, x_2)$, $(x_{m-1},\infty)$\\
        \item[(1b)] Near datapoints that have zero discrete curvature, i.e. on intervals $(x_{i-1},x_{i+1})$ with $i=2,\ldots, m-1$ such that $\epsilon_i=0$.\\
        \item[(1c)] Between datapoints with opposite discrete curvature, i.e. on intervals $(x_i,x_{i+1})$ with $i=2,\ldots, m-1$ such that $\epsilon_i\cdot \epsilon_{i+1}=-1$.\\
        \end{itemize}
    \item $f$ is convex (resp. concave) and bounded above (resp. below) by $f_{\mD}$ between any consecutive datapoints at which the discrete curvature is positive (resp. negative). Specifically, suppose for some $3\leq i\leq i+q\leq m-2$ that $x_i$ and $x_{i+q}$ are consecutive discrete inflection points in the sense that 
    \[
    \epsilon_{i-1}\neq \epsilon_i,\qquad \epsilon_i=\cdots = \epsilon_{i+q},\qquad \epsilon_{i+q}\neq \epsilon_{i+q+1}.
    \]
    If $\epsilon_i=1$ (resp. $\epsilon_i=-1$), then restricted to the interval $(x_i,x_{i+q})$, $f$ is convex (resp. concave) and lies above (resp. below) the incoming and outgoing support lines and below (resp. above) $f_{\mD}$:
    \begin{align*}
    \epsilon_i&=1\qquad \Longrightarrow \qquad \max\set{\ell_{i-1}(x),\, \ell_{i+q}(x)}~\leq~ f(x) \leq f_{\mD}(x)\\
    \epsilon_i &= -1 \qquad \Longrightarrow \qquad     \min\set{\ell_{i-1}(x),\, \ell_{i+q}(x)}~\geq~ f(x)~\geq ~f_{\mD}(x)
    \end{align*}
    for all $x\in (x_i,x_{i+q})$.
\end{enumerate}
\end{theorem}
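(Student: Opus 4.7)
The plan is to invoke the prior characterization (Theorem~\ref{T:prior}) that identifies $\RR$ with the set of piecewise linear interpolants $f \in \PLD$ minimizing the total variation $\TV{f'}$ of the derivative, and then to compute this minimum and read off the equality conditions. The lower bound $\TV{f'} \geq \sum_{i=2}^{m-1}|s_i - s_{i-1}|$ for every $f \in \PLD$ follows from a TV-contraction-under-averaging argument: by the interpolation conditions, the average of $f'$ on each $(x_k, x_{k+1})$ is the chord slope $s_k$, so replacing $f'$ by the step function $g$ with value $s_k$ on $(x_k, x_{k+1})$ gives $\TV{g} = \sum_{k=1}^{m-2}|s_{k+1}-s_k| = \sum_{i=2}^{m-1}|s_i - s_{i-1}|$, and a standard contraction inequality yields $\TV{g} \leq \TV{f'}$. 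Since $f_\mD$ achieves this lower bound exactly (its derivative equals $g$), one concludes $C_* = \sum_{i=2}^{m-1}|s_i - s_{i-1}|$.

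The heart of the proof is the equality analysis. For $f \in \PLD$ to attain $\TV{f'} = \TV{g}$, the step function $f'$ must look like an \emph{unfolding} of $g$: on each maximal run $\epsilon_i = \cdots = \epsilon_{i+q} = +1$ (equivalently $s_{i-1} < s_i < \cdots < s_{i+q}$), $f'$ must be globally nondecreasing on the union $(x_{i-1}, x_{i+q+1})$ of the underlying intervals, and symmetrically for runs of $-1$'s. Combined with the observation that minimizers must have $f = f_\mD$ on the unbounded intervals $(-\infty, x_2)$ and $(x_{m-1}, \infty)$ (any slope variation there only adds TV without helping interpolation), and that $\epsilon_i = 0$ forces $f' = s_{i-1} = s_i$ constantly on $(x_{i-1}, x_{i+1})$, this yields parts (1a) and (1b) directly. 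On a maximal $+1$-run as above, the boundary intervals $(x_{i-1}, x_i)$ and $(x_{i+q}, x_{i+q+1})$ sit at the left and right extremes of the monotone range of $f'$, so together with the averaging constraint these force $f' \equiv s_{i-1}$ on $(x_{i-1}, x_i)$ (hence $f = \ell_{i-1}$) and $f' \equiv s_{i+q}$ on $(x_{i+q}, x_{i+q+1})$ (hence $f = \ell_{i+q}$); this handles part (1c) at transition intervals and establishes the boundary values needed for part (2). On the middle portion $(x_i, x_{i+q})$, monotonicity of $f'$ means $f$ is convex, and together with interpolation at each $x_j$ for $j = i, \ldots, i+q$ this gives $f \leq f_\mD$. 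The bound $f \geq \max\{\ell_{i-1}, \ell_{i+q}\}$ follows by noting $f'(x_i^+) \geq s_{i-1}$ and $f'(x_{i+q}^-) \leq s_{i+q}$ from monotonicity, so the tangent-line extensions of $\ell_{i-1}$ and $\ell_{i+q}$ into $(x_i, x_{i+q})$ lie below the convex $f$. The concave case is symmetric.

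To finish, I would check sufficiency: any $f \in \PLD$ satisfying (1) and (2) attains $\TV{f'} = C_*$, since on each convex or concave segment $(x_i, x_{i+q})$ monotonicity of $f'$ makes the local TV exactly $|s_{i+q} - s_{i-1}| = \sum_{j=i}^{i+q}|s_j - s_{j-1}|$, while linear transition intervals contribute only the correct isolated slope jumps; these telescope to $C_*$. The main obstacle is extracting the correct equality conditions from the TV contraction: the lower bound itself is straightforward, but showing that $f'$ must be \emph{globally} monotone across each multi-interval run of constant $\epsilon$, rather than merely satisfying per-interval endpoint constraints, requires carefully tracking how slope jumps at the interior nodes $x_k$ interact with within-interval slope variation. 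This global monotonicity is precisely what upgrades a collection of local slope bounds into the global convex/concave structure claimed in part~(2) together with the support-line bounds.
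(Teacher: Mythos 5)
Your overall route coincides with the paper's: reduce to the total-variation characterization of Theorem \ref{T:prior}, identify the minimum value $\TV{Df_{\mD}}=\sum_{i=2}^{m-1}\abs{s_i-s_{i-1}}$, and then read the geometry of $\RR$ off the equality case; your sufficiency paragraph is essentially Proposition \ref{P:one-way} and Lemma \ref{L:ridgeless-pl}. The problem is the necessity direction, and you have flagged it yourself: every part of properties (1) and (2) rests on the claim that for a minimizer $f$ the derivative $Df$ is globally monotone across each maximal run of equal $\epsilon_i$'s, is constant on transition intervals ($\epsilon_i\cdot\epsilon_{i+1}=-1$), on zero-curvature intervals, and near infinity. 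In your write-up this is asserted as ``what equality in the contraction forces,'' not proven. Equality in the averaging bound does not deliver it by itself: the lower bound $\TV{Df}\geq\sum_i\abs{s_i-s_{i-1}}$ is obtained by sampling $Df$ at finitely many well-chosen points (one per inflection run, as in Lemma \ref{L:ridgeless-pl}), so equality a priori constrains $Df$ only at those sample points and leaves open the possibility of within-interval oscillation of $Df$, or of slopes overshooting $s_{i-1}$ or $s_{i+q}$ at the ends of a run, that the sampled sum simply does not see. Ruling these out is the actual content of the theorem and is where all the work in the paper lies.

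The paper closes exactly this gap with local competitor (``surgery'') arguments rather than with the contraction inequality: Proposition \ref{P:monotone} straightens $f$ on a subinterval to show $Df$ is monotone on each single $(x_i,x_{i+1})$; Corollary \ref{C:in-out} converts interpolation into the sign balance of $\sout{i}-s_i$ and $\sin{i+1}-s_i$; Propositions \ref{P:neighbors} and \ref{P:ends} give (1b) and (1a) (note your argument for (1a) only forces $f$ to be affine on $(-\infty,x_1)$; agreement with $f_{\mD}$ on all of $(-\infty,x_2)$ needs a further competitor as in Proposition \ref{P:ends}); and Propositions \ref{P:through-i}, \ref{P:i-to-i+1}, \ref{P:i-to-i+1-op} propagate slope constraints through $x_i$ and from $x_i$ to $x_{i+1}$, which are then chained by the induction in Lemma \ref{L:epsilon-slopes} to produce precisely the cross-run monotone structure you postulate, whence (1c) and (2). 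To make your proposal into a proof you would need an equivalent mechanism — e.g., perturbation arguments showing that any violation of per-interval monotonicity, of the forced constancy on transition/zero-curvature/end intervals, or of the slope bounds $s_{i-1}\leq\sin{i}\leq\sout{i}\leq s_i$ strictly decreases $\TV{Df}$ — together with an induction that propagates these local facts across the dataset. As it stands, the central step is missing, so the proposal is a correct strategy but not a complete argument.
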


\begin{figure}
\centering
\vspace{-2cm}
\begin{subfigure}{\linewidth}
    \centering
    \includegraphics[width=.9\linewidth]{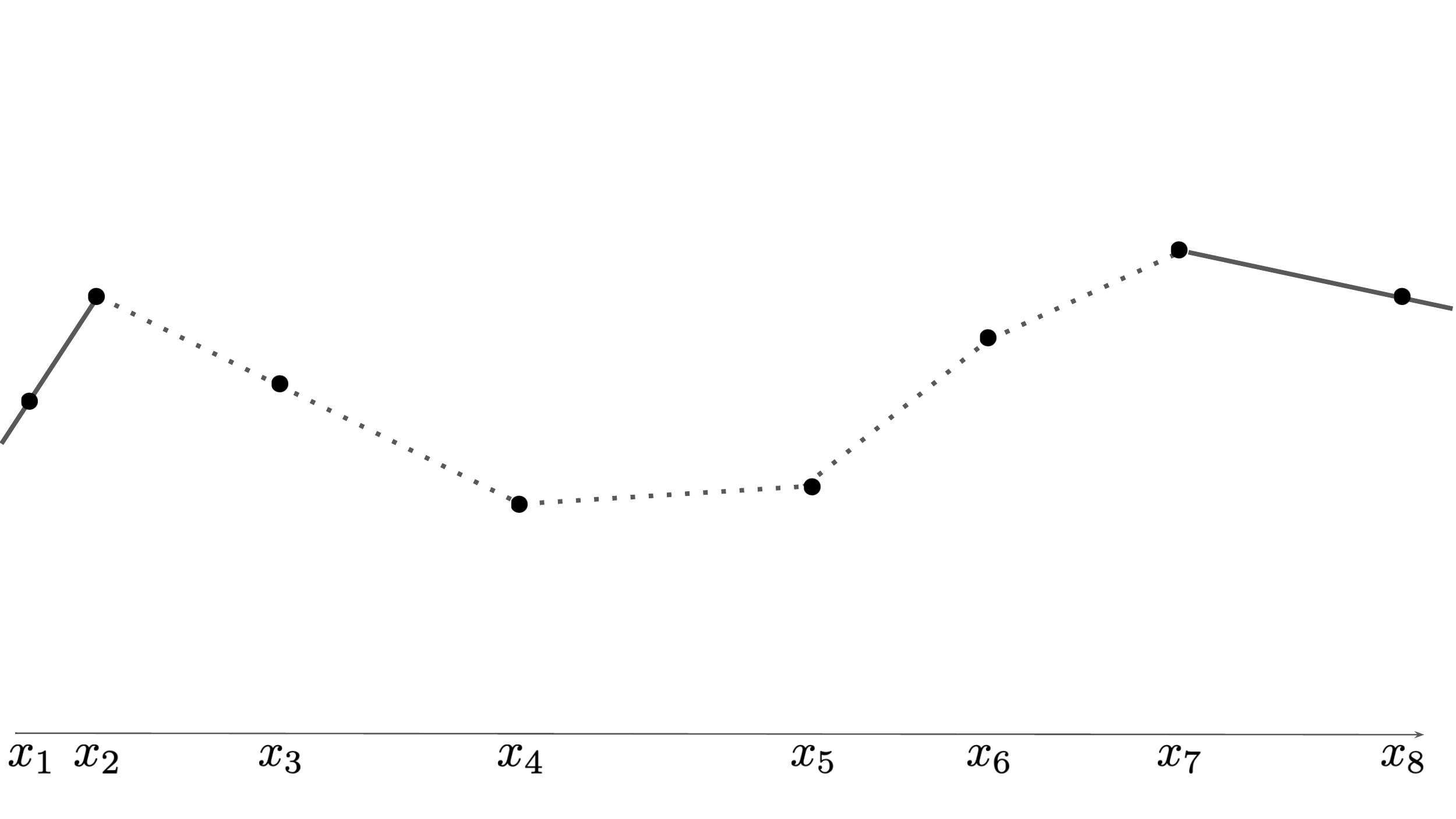}
    \caption{Step 1}\label{fig:image1}
\end{subfigure}

\begin{subfigure}{\linewidth}
    \centering
     \includegraphics[width=.9\linewidth]{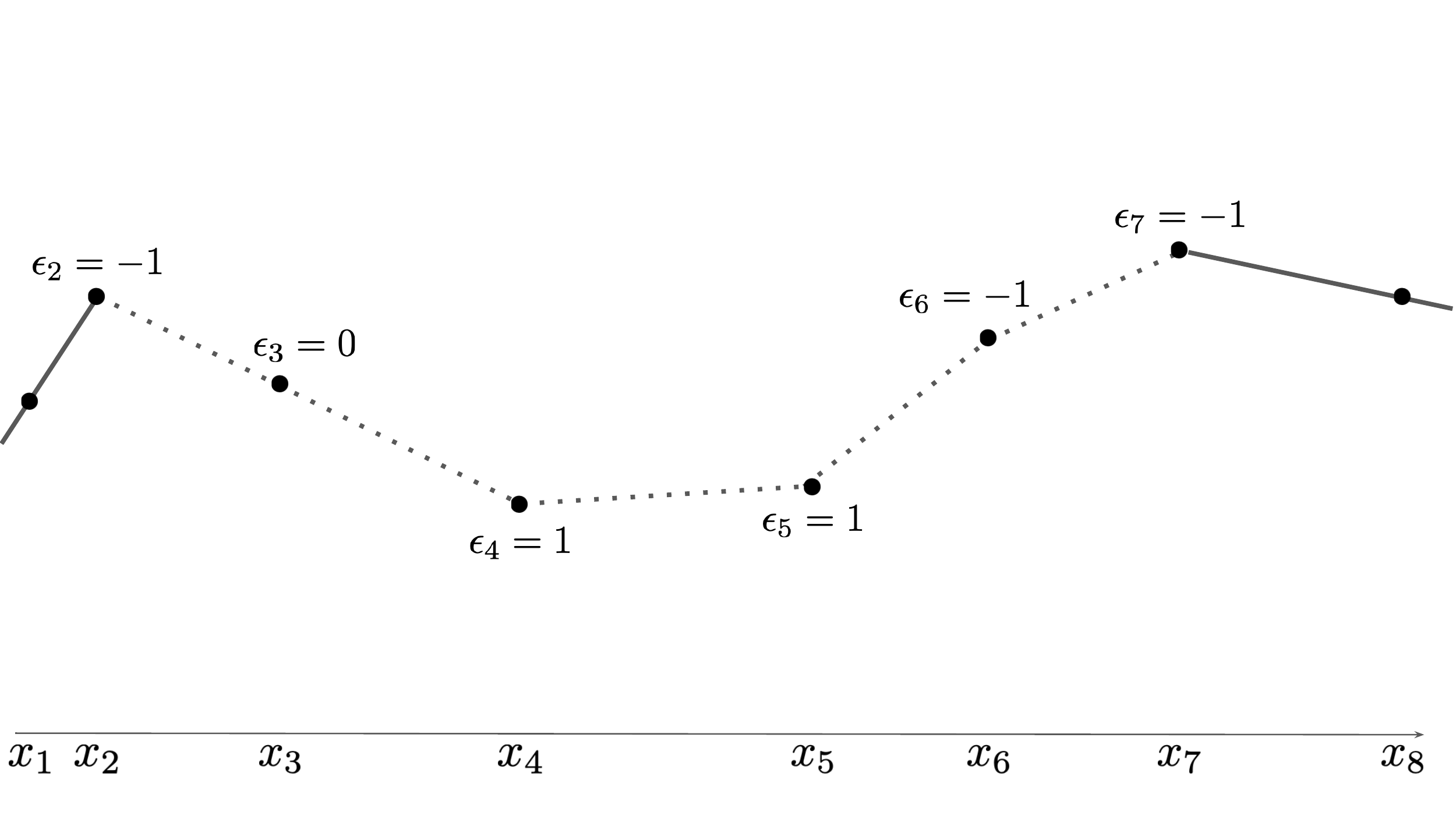}
    \caption{Step 2}\label{fig:image12}
\end{subfigure}

\begin{subfigure}{\linewidth}
    \centering
     \includegraphics[width=.9\linewidth]{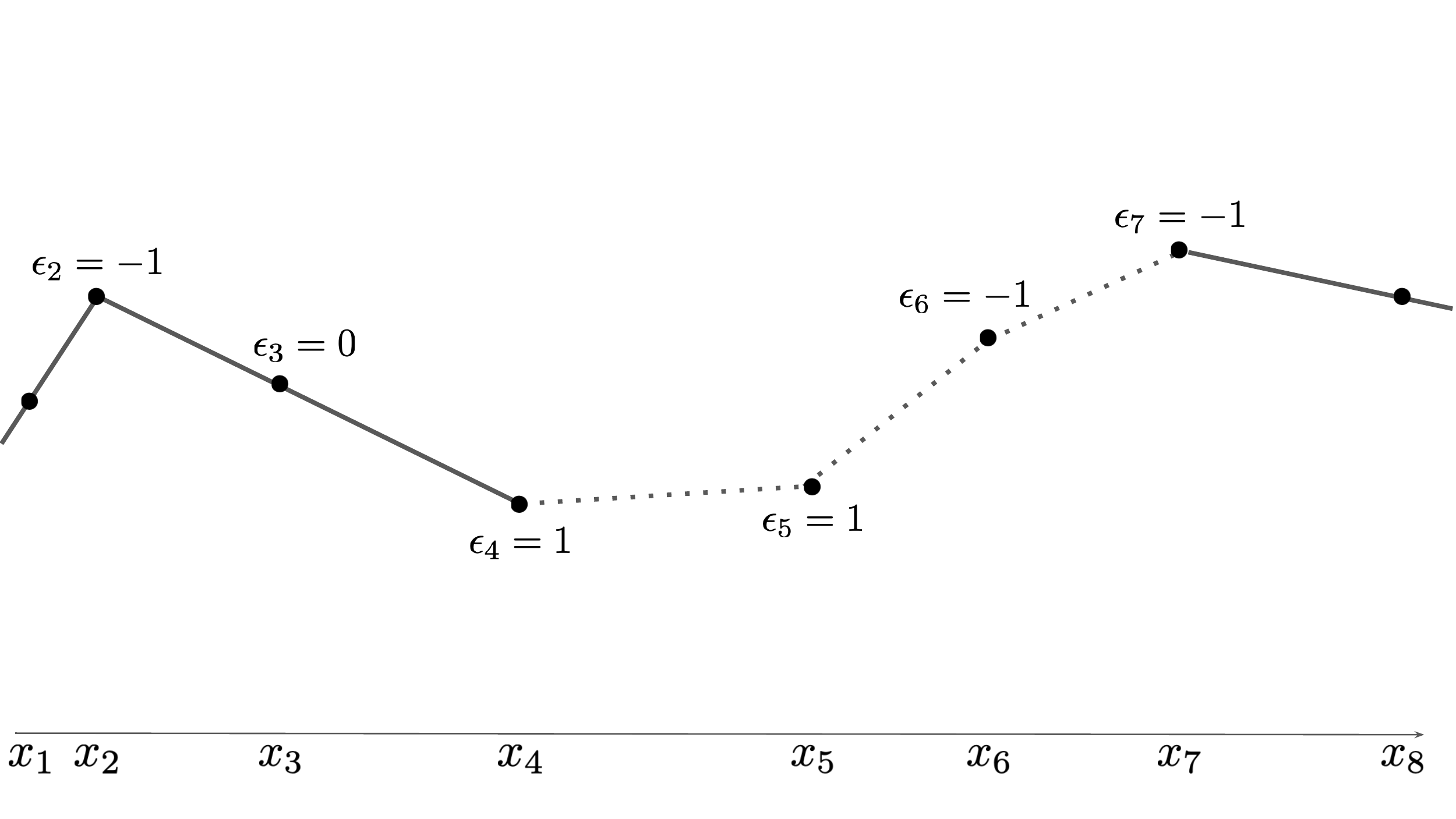}
    \caption{Step 3}\label{fig:image13}
\end{subfigure}
\caption{Steps 1 - 3 for generating $\RR$ from the dataset $\mD$.}
\label{fig:alg1}
\end{figure}

We refer the reader to \S \ref{S:pf} for a proof of Theorem \ref{T:interpolants}. Before doing so, let us illustrate Theorem \ref{T:interpolants} as an algorithm that, given the dataset $\mD$, describes all elements in $\RR$ (see Figures \ref{fig:alg1} and \ref{fig:alg2}): \\
\begin{enumerate}
    \item[\uline{Step 1}] \textbf{Linearly interpolate the endpoints}: by property (1), $f\in \RR$ must agree with $f_{\mD}$ on $(-\infty,x_2)$ and $(x_{m-1},\infty)$. \\
    \item[\uline{Step 2}] \textbf{Compute discrete curvature}: for $i=2,\ldots, m-1$ calculate the discrete curvature $\epsilon_i$ at the data point $x_i$.\\
    \item[\uline{Step 3}] \textbf{Linearly interpolate on intervals with zero curvature}: for all $i=2,\ldots,m-1$ at which $\epsilon_i=0$ property (1) guarantees that $f$ coincides with the $f_{\mD}$ on $(x_{i-1},x_{i+1})$. \\
    \item[\uline{Step 4}] \textbf{Linearly interpolate on intervals with ambiguous curvature}: for all $i=2,\ldots,m-1$ at which $\epsilon_i\cdot \epsilon_{i+1}=-1$ property (1) guarantees that $f$ coincides with $f_{\mD}$ on $(x_{i},x_{i+1})$.\\
    \item[\uline{Step 5}] \textbf{Determine convexity/concavity on remaining points}: all intervals $(x_i,x_{i+1})$ on which $f$ has not yet been determined occur in sequences $(x_i,x_{i+1}),\ldots, (x_{i+q-1},x_{i+q})$ on which $\epsilon_{i+j}=1$ or $\epsilon_{i+j}=1$ for all $j=0,\ldots,q$. If $\epsilon_i=1$ (resp. $\epsilon_i=-1$), then $f$ is any convex (resp. concave) function bounded below (resp. above) by $f_{\mD}$ and above (resp. below) the support lines $\ell_{i}(x),\, \ell_{i+q}(x)$. \\
\end{enumerate}
\begin{figure}
\begin{subfigure}{\linewidth}
  \centering
  \includegraphics[width=.8\linewidth]{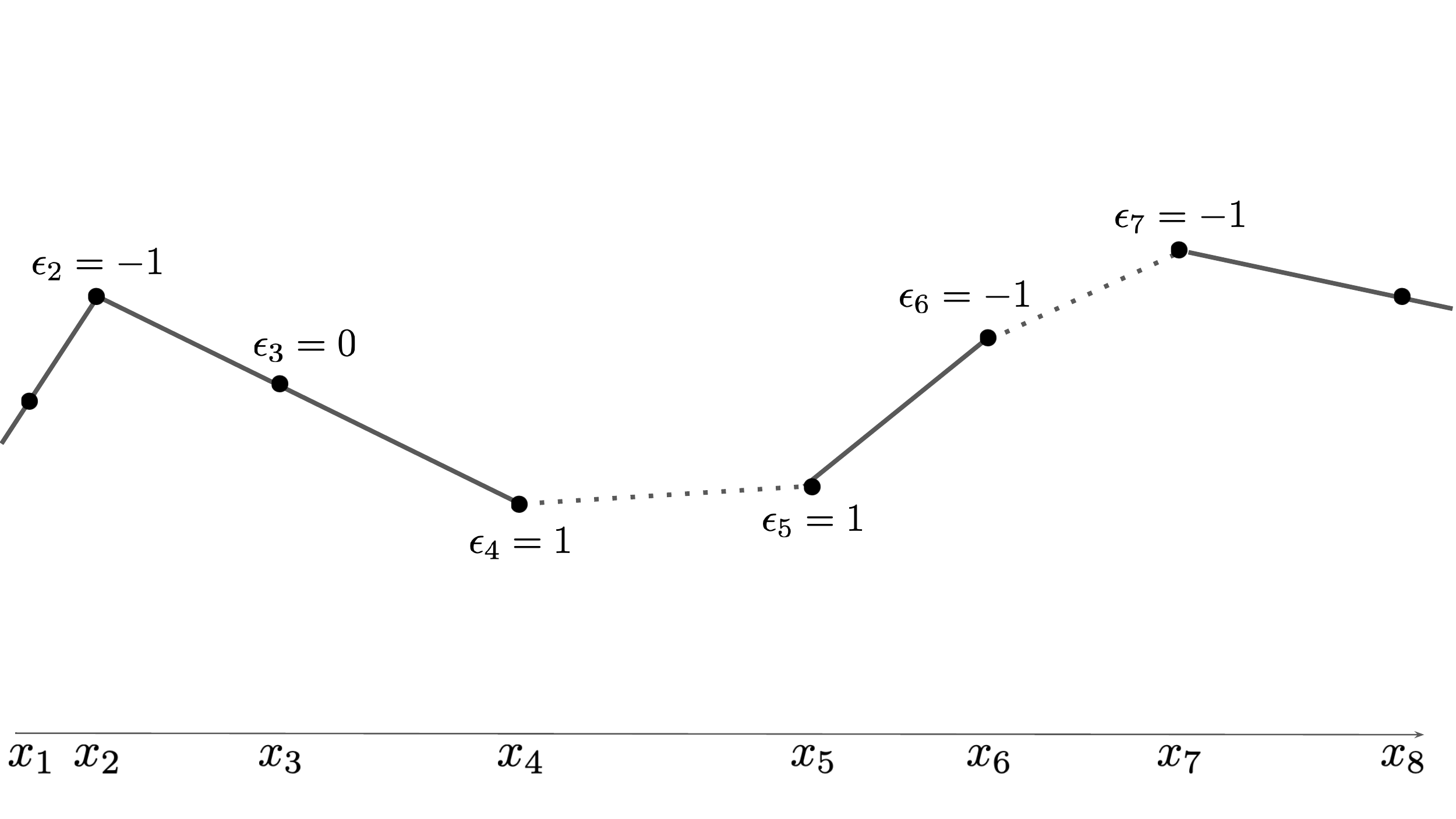}
  \caption{Step 4}
\end{subfigure} 
   
\begin{subfigure}{\linewidth}
    \centering
     \includegraphics[width=.8\linewidth]{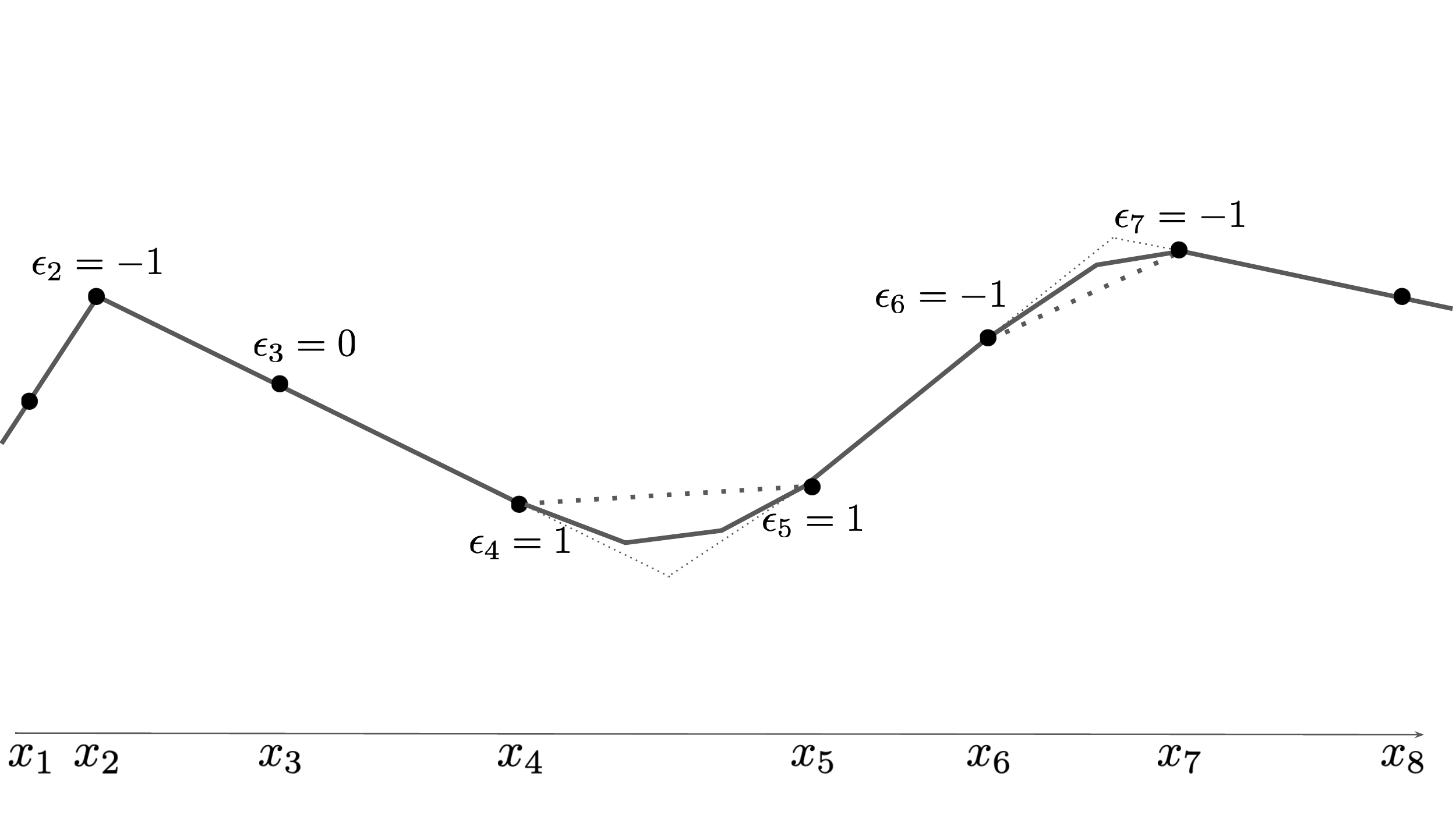}
    \caption{Step 5. One possible choice of a convex interpolant on $(x_4,x_5)$ and of a concave interpolant on $(x_6,x_7)$ is shown. Thin dashed lines are the supporting lines that bound all interpolants below on $(x_4,x_5)$ and above on $(x_6,x_7)$.}
\end{subfigure}
\caption{Steps 4 - 5 for generating $\RR$ from the dataset $\mD$.}
\label{fig:alg2}
\end{figure}

The starting point for the proof of Theorem \ref{T:interpolants} comes from the prior articles \cite{neyshabur2014search,savarese2019infinite,ongie2019function}, which obtained an insightful ``function space'' interpretation of $\RR$ as a subset of $\PLD$. Specifically, a simple computation (cf e.g. Theorem $3.3$ in \cite{savarese2019infinite} and also Lemma \ref{L:ridgeless-pl} below) shows that $f_{\mD}$ achieves the smallest value of the total variation $\TV{Df}$ for the derivative $Df$ among all $f\in \PLD.$ (The function $Df$ is piecewise constant and $\TV{Df}$ is the sum of absolute values of its jumps.) Part of the content of the prior work \cite{neyshabur2014search, savarese2019infinite,ongie2019function} is the following result
\begin{theorem}[cf Lemma $1$ in \cite{ongie2019function} and around equation (17) in \cite{savarese2019infinite}]\label{T:prior}
For any dataset $\mD$ we have
\begin{equation}\label{E:rr-old}
\RR = \set{f\in \PLD~|~\TV{Df}=\TV{Df_{\mD}}}. 
\end{equation}
\end{theorem}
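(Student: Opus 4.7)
The plan is to identify, up to choice of parametrization, the cost $C(\theta)$ of a ReLU network with the total variation $\TV{Dz(\cdot;\theta)}$ of its derivative, and then to invoke the fact (Lemma \ref{L:ridgeless-pl}) that $f_\mD$ minimizes $\TV{Df}$ among $f\in\PLD$. Once we establish that $\inf\{C(\theta)\,:\,z(\cdot;\theta)=f\}=\TV{Df}$ for every $f\in\PL$, the characterization \eqref{E:rr-old} follows by taking the infimum over $f\in\PLD$ and comparing minimizers.

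The first step is to prove the universal inequality $\TV{Dz(\cdot;\theta)}\leq C(\theta)$. Differentiating \eqref{E:z-def} gives
\[
Dz(x;\theta) \;=\; a\;+\;\sum_{j=1}^{n}W_j^{(1)}W_j^{(2)}\,\mathbf{1}[W_j^{(1)}x+b_j^{(1)}>0],
\]
a piecewise-constant function whose jumps occur at the breakpoints $t_j=-b_j^{(1)}/W_j^{(1)}$. Summing absolute values of jumps (allowing for cancellation when several neurons share a breakpoint) yields $\TV{Dz(\cdot;\theta)}\leq\sum_j |W_j^{(1)}W_j^{(2)}|$, and the AM-GM inequality $|W_j^{(1)}W_j^{(2)}|\leq\tfrac12(|W_j^{(1)}|^2+|W_j^{(2)}|^2)$ upgrades this to $\TV{Dz(\cdot;\theta)}\leq C(\theta)$.

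For the matching construction, given $f\in\PL$ with distinct breakpoints $t_1,\ldots,t_K$ and slope jumps $c_1,\ldots,c_K$, write $f(x)=\alpha x+\beta+\sum_{k=1}^K c_k\rel{x-t_k}$ so that $\TV{Df}=\sum_k|c_k|$. Represent $f$ by a width-$K$ network in a balanced way: for each $k$ take $W_k^{(1)}=\sqrt{|c_k|}$, $W_k^{(2)}=\mathrm{sgn}(c_k)\sqrt{|c_k|}$, $b_k^{(1)}=-t_k\sqrt{|c_k|}$, and set $a=\alpha$, $b=\beta$. A direct check confirms $z(\cdot;\theta)=f$ and $C(\theta)=\sum_k|c_k|=\TV{Df}$. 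Combining with the previous step gives $\min_\theta\{C(\theta)\,:\,z(\cdot;\theta)=f\}=\TV{Df}$; minimizing over $f\in\PLD$ and invoking Lemma \ref{L:ridgeless-pl} yields $C_*=\TV{Df_\mD}$, and then $z(\cdot;\theta)\in\RR$ is equivalent to $z(\cdot;\theta)\in\PLD$ with $\TV{Dz(\cdot;\theta)}=\TV{Df_\mD}$, which is \eqref{E:rr-old}.

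The main subtlety to navigate is that $\TV{Dz(\cdot;\theta)}\leq C(\theta)$ can be strict for a given $\theta$, either because the AM-GM bound is loose when $|W_j^{(1)}|\neq|W_j^{(2)}|$ or because breakpoints coincide and jumps partially cancel. Consequently, one cannot read $C(\theta)$ off from $z(\cdot;\theta)$ alone, and the asymmetry between the two halves of the argument is essential: the first establishes a \emph{universal} lower bound on $C(\theta)$ in terms of the represented function, while the second exhibits a \emph{specific} balanced, non-cancelling parametrization that saturates it. This suffices for the variational identification of $\RR$ as a subset of $\PLD$, even though individual minimizing parametrizations are highly non-unique.
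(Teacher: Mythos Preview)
Your argument is correct and is exactly the homogeneity/AM--GM approach the paper attributes to prior work \cite{neyshabur2014search, savarese2019infinite, ongie2019function}: bound $C(\theta)\geq\TV{Dz(\cdot;\theta)}$ via AM--GM, saturate it with a balanced parametrization, and then identify the TV-minimum over $\PLD$ as $\TV{Df_\mD}$. One purely cosmetic point: Lemma~\ref{L:ridgeless-pl} as \emph{stated} in the paper is literally the same assertion as Theorem~\ref{T:prior}, so invoking it by name is formally circular; what you actually need (and correctly identify) is the content of its \emph{proof}---equivalently Theorem~3.3 in \cite{savarese2019infinite}---namely that $f_\mD$ achieves $\min_{f\in\PLD}\TV{Df}$, which is logically independent of the $C(\theta)\leftrightarrow\TV{Df}$ identification you supply.
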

Theorem \ref{T:prior} says that $\RR$ is precisely the space of functions in $\PLD$ that achieve the minimal possible total variation norm for the derivative. Intuitively, functions in $\RR$ are therefore averse to oscillation in their slopes. The proof of this fact uses a simple idea introduced in Theorem $1$ of \cite{neyshabur2014search} which leverages the homogeneity of the ReLU to translate between the regularizer $C(\theta)$, which is positively homogeneous of degree $2$ in the network weights, and the penalty $\TV{Df}$, which is positively homogeneous of degree $1$ in the network function. 

Theorem \ref{T:interpolants} yields strong generalization guarantees uniformly over $\RR$. To state a representative example, suppose $\mD$ is generated by a function $f_*:\R\gives\R$:
\[
y_j = f_*(x_j).
\]
We then find the following 
\begin{corollary}[Sharp generalization on Lipschitz Functions from Theorem \ref{T:interpolants}]\label{C:lip-gen}
Fix a dataset $\mD=\set{(x_i,y_i),\,\, i=1,\ldots,m}$. We have
\begin{equation}\label{E:new-lip-est}
\sup_{f\in \RR}\norm{f}_{\mathrm{Lip}} \leq \norm{f_*}_{\mathrm{Lip}}.    
\end{equation}
Hence, if $f_*$ is $L-$Lipschitz and $x_i=i/m$ are uniformly spaced in $[0,1]$, then
\begin{equation}\label{E:lip-gen}
\sup_{f\in \RR}\sup_{x\in [0,1]}\abs{f(x)-f_*(x)}\leq \frac{2L}{m}.    
\end{equation}
\end{corollary}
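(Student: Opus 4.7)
The plan is to deduce a uniform Lipschitz bound over $\RR$ from the geometric description in Theorem \ref{T:interpolants}, and then to combine it with the interpolation condition and a nearest-data-point estimate for the pointwise bound \eqref{E:lip-gen}.

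For \eqref{E:new-lip-est}, I would observe that every $f\in \RR\subset \PLD$ is piecewise linear, so $\norm{f}_{\mathrm{Lip}}$ equals the maximum of $|f'|$ over its affine pieces, and then use Theorem \ref{T:interpolants} to classify these pieces. On the intervals covered by part (1) of the theorem (the two unbounded end pieces, the neighborhoods of zero-curvature data points, and the intervals between data points of opposite curvature sign), $f$ coincides with $f_\mD$, so the slope is one of the secants $s_j$. On a convex block $(x_i,x_{i+q})$ with $\epsilon_i=\cdots=\epsilon_{i+q}=1$ (the concave case is symmetric), the sandwich $\max\{\ell_{i-1},\ell_{i+q}\}\leq f\leq f_\mD$, together with the fact that $f$ touches each envelope at the endpoints, forces
\[
s_{i-1}\leq f'_+(x_i)\leq s_i,\qquad s_{i+q-1}\leq f'_-(x_{i+q})\leq s_{i+q},
\]
and convexity makes $f'$ non-decreasing and hence trapped in $[s_{i-1},s_{i+q}]$ throughout the block. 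Combining all cases, $|f'(x)|\leq \max_j |s_j|$; since $s_j=(f_*(x_{j+1})-f_*(x_j))/(x_{j+1}-x_j)$ is controlled by $\norm{f_*}_{\mathrm{Lip}}$, this yields \eqref{E:new-lip-est}.

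For \eqref{E:lip-gen}, I would invoke \eqref{E:new-lip-est} so that each $f\in \RR$ is itself $L$-Lipschitz, and combine this with the interpolation identities $f(x_i)=f_*(x_i)$. Given $x\in[0,1]$, choose a nearest data point $x_i$. Because $x_i=i/m$ for $i=1,\ldots,m$ and $x_1=1/m$, one has $|x-x_i|\leq 1/m$, with the bound saturated only near the left endpoint $x=0$. The triangle inequality then gives
\[
|f(x)-f_*(x)|\leq |f(x)-f(x_i)|+|f_*(x_i)-f_*(x)|\leq 2L|x-x_i|\leq 2L/m.
\]

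The main obstacle is the slope estimate on the convex/concave blocks in the proof of \eqref{E:new-lip-est}: one must verify that the two-sided envelope from Theorem \ref{T:interpolants} really does pin the one-sided derivatives of $f$ between the slopes of the neighboring secants. Once this is in hand, monotonicity of $f'$ on each block (convex case) handles the interior, and the rest of the corollary reduces to a triangle inequality.
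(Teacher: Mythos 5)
Your proposal is correct and takes essentially the same approach as the paper: both use Theorem \ref{T:interpolants} to trap $Df$ between neighboring secant slopes (you via the support-line/$f_{\mD}$ sandwich at block endpoints plus monotonicity of $Df$ from convexity, the paper via the equivalent per-interval estimate drawn from properties (1a), (1b) and (2)), yielding $\norm{Df}_{L^\infty(\R)}\leq \max_j\abs{s_j}\leq \norm{f_*}_{\mathrm{Lip}}$. The conclusion is identical as well: interpolation at the nearest data point $x'$, the bound $\abs{x-x'}\leq 1/m$, and the triangle inequality give the $2L/m$ estimate.
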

\begin{proof}
Observe that for any $i=2,\dots, m-1$ and $x\in (x_i,x_{i+1})$ at which $Df(x)$ exists we have
\begin{equation}\label{E:new-grad-est}
\epsilon_i(s_{i-1}-s_i)\leq \epsilon_i(Df(x)-s_i)\leq \epsilon_i(s_{i+1}-s_i).
\end{equation}
Indeed, when $\epsilon_i=0$ the estimate \eqref{E:new-grad-est} follows from property (1b) in Theorem \ref{T:interpolants}. Otherwise, \eqref{E:new-grad-est} follows immediately from the local convexity/concavity of $f$ in property (2). Hence, combining \eqref{E:new-grad-est} with property (1a) shows that for each $i=1,\ldots, m-1$
\[
\norm{Df}_{L^\infty(x_i,x_{i+1})}\leq \max\set{\abs{s_{i-1}},\,\abs{s_i}}.
\]
Again using property (1a) and taking the maximum over $i=2,\ldots,m$ we find
\[
\norm{Df}_{L^\infty(\R)}\leq \max_{1\leq i\leq m-1} \abs{s_i} = \norm{f_{\mD}}_{\mathrm{Lip}}.
\]
To complete the proof of \eqref{E:new-lip-est} observe that for every $i=1,\ldots,m-1$
\[
\abs{s_i} = \abs{\frac{y_{i+1}-y_i}{x_{i+1}-x_i}} = \abs{\frac{f_*(x_{i+1})-f_*(x_i)}{x_{i+1}-x_i}}\leq \norm{f_*}_{\mathrm{Lip}}\quad \Longrightarrow\quad \norm{f_{\mD}}_{\mathrm{Lip}}\leq \norm{f_*}_{\mathrm{Lip}}.
\]
Given any $x\in[0,1]$, let us write $x'$ for its nearest neighbor in $\set{i/m,\, i=1,\ldots,m}$. We find
\[
\abs{f(x)-f_*(x)} \leq \abs{f(x)-f(x')}+\abs{f_*(x')-f_*(x)}\leq \lr{\norm{f}_{\mathrm{Lip}}+\norm{f_*}_{\mathrm{Lip}}}\abs{x-x'}\leq  \frac{2L}{m}.    
\]
Taking the supremum over $f\in \RR$ and $x\in [0,1]$ proves \eqref{E:lip-gen}.
\end{proof}

Corollary \ref{C:lip-gen} gives the best possible generalization error of Lipschitz functions, up to a universal multiplicative constant, in the sense that if all we knew about $f_*$ was that it was $L$-Lipschitz and were given its values on $\set{i/m,\, i=1,\ldots,m}$, then we cannot recover $f_*$ in $L^\infty$ to accuracy that is better than a constant times $L/m$. Further, instead of choosing $x_i=i/m$ the same kind of result holds with high probability if $x_i$ are drawn independently at random from $[0,1]$, with the $2L/m$ on the right hand side replaced by $C\log(m)L/m$ for some universal constant $C>0$. The appearance of the logarithm is due to the fact that among $m$ iid points in $[0,1]$ the the largest spacing between consecutive points scales like $C\log(m)/m$ with high probability. Similar generalization results can easily be established, depending on the level of smoothness assumed for $f_*$ and the uniformity of the datapoints $x_i$.

In writing this article, it at first appeared to the author that the generalization bounds \eqref{E:lip-gen} cannot be directly obtained from the relation \eqref{E:rr-old} of prior work. The issue is that a priori the relation \eqref{E:rr-old} gives bounds only on the global value of $\TV{Df}$, suggesting perhaps that it does not provide  strong constraints on local information about the behavior of ridgeless interpolants on small intervals $(x_i,x_{i+1})$. However, the relation \eqref{E:rr-old} can actually be effectively \textit{localized} to yield the estimates \eqref{E:new-lip-est} and \eqref{E:lip-gen} but with worse constants. The idea is the following. Fix $f\in \RR$. For any $i_*=3,\ldots,m-2$ define the left, right and central portions of $\mD$ as follows:
\begin{align*}
\mD_L&:=\set{(x_i,y_i),~i<i_*},\quad \mD_C:=\set{(x_i,y_i),~ i_*-1\leq i\leq i_*+1},\quad \mD_R:=\set{(x_i,y_i),~ i_*<i}.
\end{align*}
Consider further the left, right, and central versions of $f$, defined by
\begin{align*}
f_L(x)&= \begin{cases} f(x),&~ x<x_{i_*}\\ \ell_{i_*}(x),&~ x>x_{i_*}\end{cases},\qquad
f_R(x)= \begin{cases} f(x),&~ x>x_{i_*}\\ \ell_{i_*}(x),&~ x<x_{i_*}\end{cases}
\end{align*}
and
\[
f_C(x)= \begin{cases} f(x),&~ x_{i*-1}<x<x_{i_*+1}\\ \ell_{i_*-1}(x),&~ x<x_{i_*-1}\\\ell_{i_*}(x),&~ x>x_{i_*+1}\end{cases},
\]
Using \eqref{E:rr-old}, we have $ \TV{Df_{\mD}} = \TV{Df}$. Further,
\begin{align*}
    \TV{Df}\geq  \TV{Df_L}+\TV{Df_C}+\TV{Df_R},
\end{align*}    
which, by again applying \eqref{E:rr-old} but this time to $\mD_{L},\mD_{R}$ and $f_{L},f_{R}$, yields the bound 
\[
  \TV{Df}\geq \TV{f_{\mD_L}}+\TV{Df_C}+\TV{Df_{\mD_R}}.
\]
Using that
\[
 \TV{Df_{\mD}} = \sum_{i=2}^{m}\abs{s_i-s_{i-1}},\quad  \TV{f_{\mD_L}} = \sum_{i=2}^{i_*-2} \abs{s_i-s_{i-1}},\quad \TV{Df_{\mD_R}} = \sum_{i=i_*+2}^{m-1} \abs{s_i-s_{i-1}}
\]
we derive the localized estimate
\[
\abs{s_{i_*+1}-s_{i_*}}+\abs{s_{i_*}-s_{i_*-1}}+\abs{s_{i_*-1}-s_{i_*-2}}\geq \TV{Df_C}  
\]
Note further that
\[
\TV{Df_C}  \geq \max_{x\in (x_i,x_{i+1})} Df(x)-\min_{x\in (x_i,x_{i+1})} Df(x),
\]
where the max and min are taken over those $x$ at which $Df(x)$ exists. The interpolation condition $f(x_i)=y_i$ and $f(x_{i+1})=y_{i+1}$ yields that
\[
\max_{x\in (x_i,x_{i+1})}  Df(x)\geq s_i\qquad \text{and}\qquad \min_{x\in (x_i,x_{i+1})} Df(x)\leq s_i.
\]
Putting together the previous three lines of inequalities (and checking the edge cases $i=2,m-1$), we conclude that for any $i=2,\ldots,m-1$ we have
\[
\norm{Df(x) - s_i}_{L^\infty(x_i,x_{i+1})} \leq \abs{s_{i+1}-s_i}+\abs{s_i-s_{i-1}}+\abs{s_{i-1}-s_{i-2}},
\]
where we set $s_0=s_1$. Thus, proceeding as in the last few lines of the proof of Corollary \ref{C:lip-gen}, we conclude that
\[
\norm{f}_{\mathrm{Lip}}\leq 7\norm{f_*}_{\mathrm{Lip}}
\]
and that therefore for any $x\in [0,1]$ we find
\[
\abs{f(x)-f_*(x)}\leq \frac{14L}{m}
\]
when the datapoints $x_i$ are uniformly spaced. These last two estimates are precisely like those in Corollary \ref{C:lip-gen} but with slightly worse constants.

\section{Acknowledgements}
It is a pleasure to thank Peter Binev, Ron DeVore, Simon Foucart, Leonid Hanin, Jason Klusowski, Rob Nowak, Guergana Petrova,  Pokey Rule, and Daniel Soudry for useful discussions. 

\section{Proof of Theorem \ref{T:interpolants}}\label{S:pf}
\noindent Our proof of Theorem \ref{T:interpolants} is structured as follows. First, we shows that any $f\in \RR$ satisfies properties (1) and (2). This constitutes the majority of the argument and requires several preparatory results, starting with Proposition \ref{P:monotone} and its Corollary \ref{C:in-out}. With these in hand, we derive in Propositions \ref{P:through-i}, \ref{P:i-to-i+1}, and \ref{P:i-to-i+1-op} constraints on the local behavior of $f$ on small intervals of the form $(x_i,x_{i+1})$ or $(x_{i-1},x_{i+1})$. Taken together these Propositions, and several other results, imply properties (1) and (2). The details for this step are around Lemma \ref{L:epsilon-slopes}. Finally, establish in Proposition \ref{P:one-way} that any $f$ which satisfies properties (1) and (2) belongs to $\RR$. To start, we introduce some notation. For each $f\in \PLD$ and every $x\in \R$, let us write
\[
s_{\mathrm{in}}(x)=s_{\mathrm{in}}(f,x): =\lim_{\epsilon\gives 0^+} Df(x-\epsilon),\qquad s_{\mathrm{out}}(x)=s_{\mathrm{out}}(f,x):= \lim_{\epsilon\gives 0^+} Df(x+\epsilon)
\]
for the incoming and outgoing slopes of $f$ at $x$. For any $f\in \mathrm{PL}$ the second derivative $D^2f$ is an atomic measure and we have
\[
D^2 f = \sum_{j=1}^{k} c_j \delta_{\xi_j}, \qquad c_j = s_{\mathrm{out}}(f,\xi_j)-s_{\mathrm{in}}(f,\xi_j)
\]
where $\xi_j$ are the points of discontinuity for the derivative $Df$. We will usually supress $f$ from the notation. Thus, $Df$, and in particular $Dz$ for any one layer ReLU network $z$, has a well-defined total variation
\[
\norm{Df}_{TV}:=\sum_{j=1}^k \abs{c_j}.
\]
\noindent Much of the remainder of our proof results on the following fundamental observation.

\begin{proposition}\label{P:monotone}
Fix $f\in \RR$. For every $i=1,\ldots,m-1$ and $Df$ is monotone on $(x_i,x_{i+1})$ in the sense that the functions $s_{\mathrm{in}}(f,x)$ and $s_{\mathrm{out}}(f,x)$ are both either non-increasing or non-decreasing for $x\in (x_{i},x_{i+1})$.
\end{proposition}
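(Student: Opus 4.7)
The plan is to argue by contradiction, using the characterization of $\RR$ from Theorem \ref{T:prior}, namely $\RR = \set{f \in \PLD : \TV{Df} = \TV{Df_\mD}}$. Suppose some $f \in \RR$ has $Df$ non-monotone on an interval $(x_i, x_{i+1})$; the goal is to construct $g \in \PLD$ with $\TV{Dg} < \TV{Df}$, contradicting minimality.

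Let $\sigma_0, \sigma_1, \ldots, \sigma_k$ denote the successive slopes of $f$ on the sub-intervals of $(x_i, x_{i+1})$ cut out by the breakpoints $\xi_1 < \cdots < \xi_k$ of $Df$. Both $s_{\mathrm{in}}$ and $s_{\mathrm{out}}$ take on precisely the values $\sigma_0, \ldots, \sigma_k$ in this order on $(x_i, x_{i+1})$, so non-monotonicity of either function is equivalent to non-monotonicity of the sequence. Such non-monotonicity forces three consecutive slopes $a, b, c$ on sub-intervals $(p, q), (q, r), (r, s) \subset (x_i, x_{i+1})$ forming either a strict peak ($b > \max(a, c)$) or a strict valley ($b < \min(a, c)$).

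The construction ``flattens'' the peak or valley over a small window. For $\delta > 0$ with $\delta < \min(q - p, s - r)$, define $g$ to equal $f$ outside $(q - \delta, r + \delta)$, and on that window let $g$ be the affine function connecting $(q - \delta, f(q - \delta))$ to $(r + \delta, f(r + \delta))$. Since the modification is confined to the open interval $(x_i, x_{i+1})$, the function $g$ still interpolates $\mD$ and hence lies in $\PLD$. A direct computation gives the replacement slope $d = (a\delta + b(r - q) + c\delta)/(r - q + 2\delta)$, a strictly convex combination of $a, b, c$ with positive weight on all three; in the peak case this forces $d < b$ (and $d > \min(a, c)$), and symmetrically $d > b$ in the valley case.

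The TV accounting is the main step I expect to require care. On the window $(q - \delta, r + \delta)$ the old $\TV{Df}$ equals $(b - a) + (b - c)$ (jumps at $q$ and $r$), while the new $\TV{Dg}$ equals $|d - a| + |d - c|$ (jumps at $q - \delta$ and $r + \delta$, since the slopes $a$ and $c$ immediately outside this window are unchanged). Everything then reduces to the elementary but case-dependent inequality $|d - a| + |d - c| < (b - a) + (b - c)$ in the peak case, and symmetrically for the valley. I plan to verify this by splitting on whether $d$ lies in $[\min(a, c), \max(a, c)]$ or in $(\max(a, c), b)$; in both cases the triangle inequality combined with the strict bound $d < b$ yields strict inequality. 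The result is $\TV{Dg} < \TV{Df} = \TV{Df_\mD}$, contradicting Theorem \ref{T:prior} and thereby proving that $s_{\mathrm{in}}$ and $s_{\mathrm{out}}$ are both monotone on $(x_i, x_{i+1})$.
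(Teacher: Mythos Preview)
Your proposal is correct and follows essentially the same strategy as the paper's proof: assume a non-monotone triple of consecutive slopes, replace the middle piece by a straight line, and show the total variation strictly decreases. Your choice to straighten over the slightly smaller window $(q-\delta, r+\delta)$ rather than over the full three-slope span is a mild simplification, since it fixes the boundary slopes at $a$ and $c$ and thereby avoids the paper's extra case analysis involving the incoming and outgoing slopes $\sigma_{\mathrm{in}}, \sigma_{\mathrm{out}}$.
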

\begin{proof}
We proceed by contradiction. That is, let us suppose that $f\in \RR$ and that for some $i$ there exist 
\[
x_i\leq\xi_1<\xi_2<\xi_3<\xi_4\leq x_{i+1}
\]
such that $f$ is given by distinct affine functions with slopes $\sigma_j$ when restricted to any of $(\xi_j,\xi_{j+1})$ for $j=1,2,3$ but that the sequence $\sigma_1,\sigma_2,\sigma_3$ is not monotone. Without loss of generality we assume 
\begin{equation}\label{E:non-mon}
    \sigma_1 ,\sigma_3 < \sigma_2.
\end{equation}
\begin{figure}
    \centering
    \includegraphics[width=\linewidth]{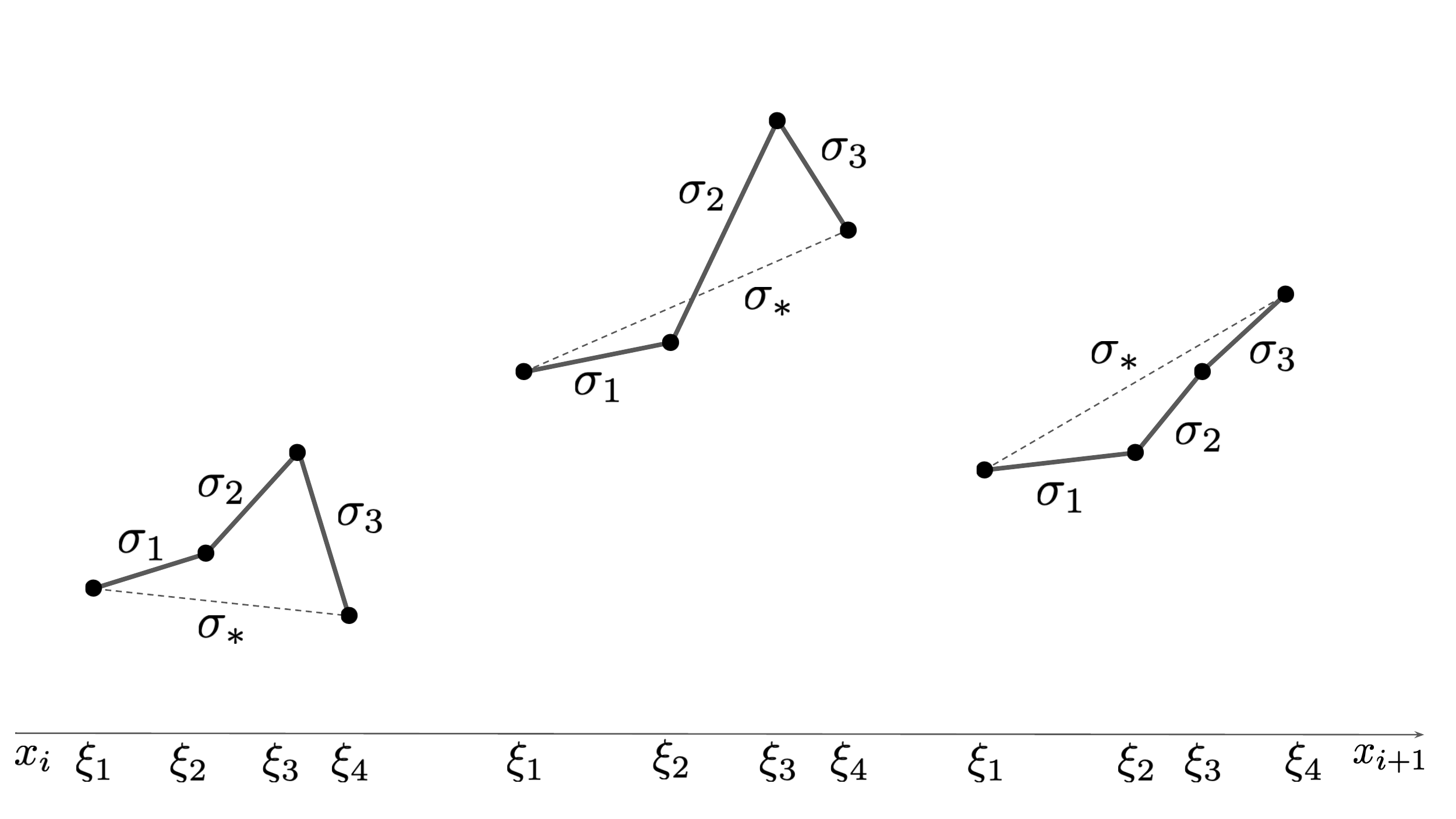}
    \caption{The three possible relative configurations for $\sigma_j,\sigma_*$ are shown. On the left, $\sigma_3<\sigma_*\leq \sigma_1<\sigma_2$. In the center $\sigma_1,\sigma_3<\sigma_*<\sigma_2$. On the right, $\sigma_1<\sigma_*<\sigma_3<\sigma_2$.}
    \label{fig:monotone}
\end{figure}
In particular, for all $\delta$ sufficiently small, we have
\begin{equation}\label{E:ftv}
  \text{Total Variation of }Df\text{ on } (\xi_1-\delta ,\xi_4+\delta ) =  2\sigma_2-\sigma_1-\sigma_3 + \abs{\sigma_1-\sigin}+\abs{\sigma_3-\sigout},
\end{equation}
where
\[
\sigin := s_{\mathrm{in}}(f,\xi_1)=\lim_{\epsilon\gives 0^+} Df(\xi_1-\epsilon)\]
and 
\[
\sigout := s_{\mathrm{out}}(f,\xi_4)=\lim_{\epsilon\gives 0^+} Df(\xi_4+\epsilon).
\]
Define
\[
\sigma_*:= \frac{f(\xi_4)-f(\xi_1)}{\xi_1-\xi_4} = \frac{\sigma_1(\xi_2-\xi_1)+\sigma_2(\xi_3-\xi_2)+\sigma_3(\xi_4-\xi_3)}{\xi_4-\xi_1}.
\]
Note that the constraint \eqref{E:non-mon} and the fact that $\sigstar$ is a convex combination of $\sigma_j$ guarantees that 
\begin{equation}\label{E:sigs}
    \min\set{\sigma_1,\sigma_3}<\sigma_*<\sigma_2.
\end{equation}
See Figure \ref{fig:monotone} for the three possible cases. Consider $g\in \PLD$ defined as follows:
\[
g(x) = \begin{cases} f(x),&\quad x\in (\xi_1,\xi_4)^c\\
\sigma_* (x-\xi_1)+f(\xi_1),&\quad x\in (\xi_1,\xi_4)
\end{cases}.
\]
The function $g$ represents a "straightening of f" between $\xi_1$ and $\xi_4$, and we will now show that the total variation of $Dg$ on $(\xi_1-\delta ,\xi_4+\delta)$ is strictly smaller than that of $Df$ on the same interval. Since the total variations of $Df$ and $Dg$ agree on $(\xi_1,\xi_4)^c$ this will contradict the minimality of $\TV{Df}$ over $\PLD$. Indeed, considering all possible cases for the relative sizes of $\sigin, \sigout$ and $\sigma_*$ we find for all $\delta$ sufficiently small
\begin{align*}
    \text{Total Variation of }Dg\text{ on } (\xi_1-\delta ,\xi_4+\delta) = \max \set{\abs{2\sigstar-\sigin-\sigout},\abs{\sigin-\sigout}}.
\end{align*}
Combining this with the expression \eqref{E:ftv} for the total variation of $Df$ and the following elementary Lemma completes the proof.
\begin{lemma}
For any $\sigma_1,\sigma_2,\sigma_3,\sigma_*$ satisfying \eqref{E:sigs} we have 
\[
2\sigma_2-\sigma_1-\sigma_3 + \abs{\sigma_1-\sigin}+\abs{\sigma_3-\sigout}> \max \set{\abs{2\sigstar-\sigin-\sigout},\abs{\sigin-\sigout}}.
\]
\end{lemma}
\begin{proof}
We consider all four cases for the maximum on the right hand side. We have
\begin{align*}
   & 2\sigma_2-\sigma_1-\sigma_3+\abs{\sigma_1-\sigin}+\abs{\sigma_3-\sigout} - (2\sigstar-\sigin-\sigout)\\
   &\quad=2(\sigma_2-\sigma_*) + \abs{\sigma_1-\sigin} -(\sigma_1-\sigin)+\abs{\sigma_3-\sigout}-(\sigma_3-\sigout)\\&\quad>0,
\end{align*}
as desired. Similarly,
\begin{align*}
   & 2\sigma_2-\sigma_1-\sigma_3+\abs{\sigma_1-\sigin}+\abs{\sigma_3-\sigout} - (\sigin+\sigout-2\sigstar)\\
   &\quad=2(\sigma_2+\sigma_*-\sigma_1-\sigma_3) + \abs{\sigma_1-\sigin} -(\sigin-\sigma_1)+\abs{\sigma_3-\sigout}-(\sigout-\sigma_3)\\&\quad>0,
\end{align*}
as desired. Further,
\begin{align*}
   & 2\sigma_2-\sigma_1-\sigma_3+\abs{\sigma_1-\sigin}+\abs{\sigma_3-\sigout} - (\sigin-\sigout)\\
   &\quad=2(\sigma_2-\sigma_1) + \abs{\sigma_1-\sigin} -(\sigin-\sigma_1)+\abs{\sigma_3-\sigout}-(\sigma_3-\sigout)\\&\quad>0,
\end{align*}
as desired. Finally,
\begin{align*}
   & 2\sigma_2-\sigma_1-\sigma_3+\abs{\sigma_1-\sigin}+\abs{\sigma_3-\sigout} - (\sigout-\sigin)\\
   &\quad=2(\sigma_2-\sigma_3) + \abs{\sigma_1-\sigin} -(\sigma_1-\sigin)+\abs{\sigma_3-\sigout}-(\sigout-\sigma_3)\\&\quad>0,
\end{align*}
completing the proof.
\end{proof}

\end{proof}

Proposition \ref{P:monotone} shows that any $f\in \RR$ is either convex or concave on any interval of the form $\I{i}{i+1}$. This gives several useful consequences, for example the following
\begin{corollary}[of Proposition \ref{P:monotone}]\label{C:in-out}
Fix $f\in \RR$. For each $i=1,\ldots, m-1$, 
\[
\mathrm{sgn}\lr{\sin{i+1}-s_i}  + \mathrm{sgn}\lr{\sout{i}-s_i} =0.
\]
\end{corollary}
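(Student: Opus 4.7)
The plan is to deduce the corollary from Proposition \ref{P:monotone} together with the interpolation conditions $f(x_i)=y_i$ and $f(x_{i+1})=y_{i+1}$. The key observation is that $s_i$ is precisely the mean value of $Df$ on $(x_i,x_{i+1})$, since by the fundamental theorem of calculus (applied to the continuous, piecewise affine function $f$) we have
\[
\int_{x_i}^{x_{i+1}} Df(x)\,dx \;=\; f(x_{i+1})-f(x_i) \;=\; y_{i+1}-y_i \;=\; s_i\,(x_{i+1}-x_i).
\]

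By Proposition \ref{P:monotone}, $Df$ is monotone on $(x_i,x_{i+1})$. I will handle the non-decreasing case (the non-increasing case is symmetric). In that case $\sout{i}$ is the infimum and $\sin{i+1}$ is the supremum of $Df$ on $(x_i,x_{i+1})$, so the mean-value identity above immediately gives the chain of inequalities
\[
\sout{i} \;\leq\; s_i \;\leq\; \sin{i+1}.
\]
This yields $\mathrm{sgn}(\sout{i}-s_i)\in\{-1,0\}$ and $\mathrm{sgn}(\sin{i+1}-s_i)\in\{0,1\}$, so the only way their sum could fail to be $0$ is if exactly one of the two quantities vanishes.

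The main step, and the place where care is required, is to rule out this mixed case. Suppose $\sout{i}=s_i$ while $\sin{i+1}>s_i$. Then monotonicity of $Df$ forces $Df(x)\geq s_i$ on $(x_i,x_{i+1})$, but the integral identity forces the average of $Df$ on this interval to equal $s_i$. A non-negative function with zero average (after subtracting $s_i$) must vanish almost everywhere, so $Df\equiv s_i$ on $(x_i,x_{i+1})$, contradicting $\sin{i+1}>s_i$. The symmetric argument rules out $\sout{i}<s_i$ together with $\sin{i+1}=s_i$. Hence either both signs are $0$ (the affine case $\sout{i}=\sin{i+1}=s_i$) or they are $-1$ and $+1$ respectively, and in either situation their sum is $0$. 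The non-increasing case is identical with inequalities reversed, producing $+1$ and $-1$ or both zero.

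I do not anticipate any hard obstacle here; the only subtlety is the elementary but necessary observation that a monotone function whose extreme boundary value equals its average must be constant, which is what forces the sign pattern to be strictly opposite (or both zero) rather than mixed.
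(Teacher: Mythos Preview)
Your proof is correct and follows essentially the same approach as the paper: both combine the monotonicity of $Df$ on $(x_i,x_{i+1})$ from Proposition~\ref{P:monotone} with the interpolation constraint $f(x_i)=y_i,\ f(x_{i+1})=y_{i+1}$ to force the endpoint slopes to lie on opposite sides of $s_i$. Your packaging via the mean-value identity $\frac{1}{x_{i+1}-x_i}\int_{x_i}^{x_{i+1}}Df=s_i$ is a bit more streamlined than the paper's explicit case analysis on $\mathrm{sgn}(\sout{i}-s_i)$, but the underlying argument is the same.
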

\begin{proof}
Suppose first $\mathrm{sgn}(\sout{i}-s_i)=0$. That is, $\sout{i}=s_i$. By Proposition \ref{P:monotone} we have $s_{\mathrm{out}}(x)$ is monotone for $x\in (x_i,x_{i+1})$. Thus, if there exists $\xi\in (x_i,x_{i+1})$ so that $Df(\xi)>s_i$, then $f(x_{i+1})> (x_{i+1}-x_i)s_i + f(y_i)=y_{i+1}$, contradicting the assumption that $f\in \PLD$. A similar contradiction occurs if there exists $\xi\in (x_i,x_{i+1})$ so that $Df(\xi)<s_i$. Hence, we conclude that $\sin{i+1}=s_i$, as desired. Next, suppose $\sout{i}>s_i$. In particular, there exists $\xi_+\in (x_i,x_{i+1})$ such that 
\[
s_{\mathrm{in}}(f,\xi_+) > s_i.
\]
Since $f$ satisfies $f(x_i)=y_i$ and $f(x_{i+1})=y_{i+1}$ there must exist $\xi_-\in (x_i,x_{i+1})$ such that 
\[
s_{\mathrm{in}}(f,\xi_-) < s_i.
\]
By Proposition \ref{P:monotone}, $s_{\mathrm{in}}(f,\xi)$ is monotone for $\xi \in (x_i,x_{i+1})$. We see by comparing $s_{\mathrm{in}}(f,\xi_\pm)$ that it is in fact non-increasing. Since $x_{i+1}-\delta>\xi_-$ for $\delta$ sufficiently small, we conclude that $\sin{i+1}<s_i$, as desired. The case $\sout{i}<s_i$ is analogous, completing the proof. 
\end{proof}

For the remainder of the proof we fix $f\in\RR$ and show that it must satisfy properties (1) and (2). To prove this, we use Proposition \ref{P:monotone} and Corollary \ref{C:in-out} to derive Propositions \ref{P:neighbors}, \ref{P:ends}, \ref{P:through-i}, and \ref{P:i-to-i+1} that together determine the structure of $f$. Specifically, Propositions \ref{P:neighbors}, \ref{P:ends} and a combination of Propositions \ref{P:through-i} and \ref{P:i-to-i+1} show that $f$ satisfies property (1). Then, a different application of Propositions \ref{P:through-i} and \ref{P:i-to-i+1}, together with the fact that $f$ satisfies property (1), will imply that $f$ satisfies property (2) as well.

\begin{proposition}[$f$ agrees with $f_{\mD}$ on colinear neighbors]\label{P:neighbors}
Fix $i=2,\ldots, m-1$. Suppose $\epsilon_i=0$. Then
\[
\sout{i-1}=\sin{i}=\sout{i}=\sin{i+1}=s_{i-1} = s_{i}.
\]
Hence, $f(x)=f_{\mD}(x)$ for all $x\in \I{i-1}{i+1}$.
\end{proposition}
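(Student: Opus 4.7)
The plan is to first prove the four pointwise slope identities
\[
\sout{i-1}=\sin{i}=\sout{i}=\sin{i+1}=s_{i-1}=s_i,
\]
calling this common value $s$; the assertion $f\equiv f_{\mD}$ on $(x_{i-1},x_{i+1})$ will then follow at once, because these identities force $f$ to be affine with slope $s$ on each of $(x_{i-1},x_i)$ and $(x_i,x_{i+1})$ with no jump at $x_i$ (indeed $\sout{i-1}=\sin{i}=s$ combined with the monotonicity of the slope supplied by Proposition \ref{P:monotone} pins $Df$ to the constant $s$ on the open sub-interval, and similarly on the other side), while $f_{\mD}$ is itself affine with slope $s$ on $(x_{i-1},x_{i+1})$ because $\epsilon_i=0$ makes the three datapoints involved collinear.

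The vehicle for the slope identities is a local straightening competitor. Define $g\in\PLD$ by $g=f$ outside $(x_{i-1},x_{i+1})$ and $g$ affine with slope $s$ on $(x_{i-1},x_{i+1})$; the collinearity above ensures $g$ passes through each of $(x_{i-1},y_{i-1})$, $(x_i,y_i)$, $(x_{i+1},y_{i+1})$, so indeed $g\in\PLD$. Abbreviating
\[
a_0=\sin{i-1},\ a_1=\sout{i-1},\ a_2=\sin{i},\ a_3=\sout{i},\ a_4=\sin{i+1},\ a_5=\sout{i+1},
\]
Proposition \ref{P:monotone} identifies the total variation of $Df$ on each open sub-interval $(x_{i-1},x_i)$ and $(x_i,x_{i+1})$ with $|a_2-a_1|$ and $|a_4-a_3|$ respectively, so on a small neighborhood of $[x_{i-1},x_{i+1}]$ one can write
\[
\TV{Df}=|a_1-a_0|+|a_2-a_1|+|a_3-a_2|+|a_4-a_3|+|a_5-a_4|,\qquad \TV{Dg}=|s-a_0|+|a_5-s|.
\]

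Two triangle-inequality bounds $|s-a_0|\le|s-a_1|+|a_1-a_0|$ and $|a_5-s|\le|a_5-a_4|+|a_4-s|$, combined with the identities $|a_2-a_1|=|a_1-s|+|a_2-s|$ and $|a_4-a_3|=|a_3-s|+|a_4-s|$ furnished by Corollary \ref{C:in-out} (which says that each of the pairs $\{a_1,a_2\}$ and $\{a_3,a_4\}$ either both equal $s$ or lie strictly on opposite sides of $s$), collapse the difference to
\[
\TV{Df}-\TV{Dg}\ \ge\ |a_2-s|+|a_3-s|+|a_3-a_2|\ \ge\ |a_2-s|+|a_3-s|.
\]
Since $f\in\RR$ minimizes $\TV{Df}$ over $\PLD$ the right-hand side must vanish, forcing $a_2=a_3=s$, and a second application of Corollary \ref{C:in-out} promotes this to $a_1=a_4=s$. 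The main care will lie not in any one estimate but in cleanly aligning the three contributions to $\TV{Df}$ (jumps at the three datapoints plus variation on the two open sub-intervals) with the two contributions to $\TV{Dg}$; once that bookkeeping is in place the one-line estimate above closes the argument.
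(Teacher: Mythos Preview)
Your argument is correct and follows essentially the same approach as the paper: you construct the identical straightening competitor $g$ (affine with slope $s$ on $(x_{i-1},x_{i+1})$, equal to $f$ outside), invoke Proposition~\ref{P:monotone} to compute the local total variation, and use Corollary~\ref{C:in-out} to split $|a_2-a_1|$ and $|a_4-a_3|$ across $s$. The only difference is organizational: the paper argues by contradiction (assume some slope differs from $s$ and derive a strict inequality $\TV{Df}>\TV{Dg}$), whereas you give a direct inequality $\TV{Df}-\TV{Dg}\ge |a_2-s|+|a_3-s|$ and read off $a_2=a_3=s$ from minimality, then bootstrap to $a_1=a_4=s$ via a second application of Corollary~\ref{C:in-out}; this is slightly cleaner but not a genuinely different route.
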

\begin{proof}
By definition, since $\epsilon_i=0$, we have $s_i=s_{i-1}.$ Suppose for the sake of contradiction that 
\[
\text{at least one of }\sout{i-1},\, \sin{i},\, \sout{i},\, \sin{i+1} \text{ does not equal }s_i.
\]
By Corollary \ref{C:in-out}, this means that either one or both least one of the pairs $(\sout{i-1},\, \sin{i})$ or $(\sout{i},\, \sin{i+1})$ are both not equal to $s_i$. We will suppose without loss of generality that 
\begin{equation}\label{E:zero-as}
\min \set{\sout{i-1},\sin{i}}< s_i< \max \set{\sout{i-1},\sin{i}}.
\end{equation}
Note also that by Corollary \ref{C:in-out} and the fact that $f(x_i)=y_i$ and $f(x_{i+1})=y_{i+1}$ we also have
\begin{equation}\label{E:zero-as2}
\min \set{\sout{i},\sin{i+1}}\leq s_i\leq \max \set{\sout{i},\sin{i+1}}.
\end{equation}
By definition, if $\epsilon_i=0$, then $s_{i-1}=s_i$. By Proposition \ref{P:monotone}, the total variation of $Df$ on $(x_{i-1}-\delta, x_{i+1}+\delta)$ equals, for all $\delta$ sufficiently small,
\begin{align*}
&\abs{\sout{i+1}-\sin{i+1}}+\abs{\sin{i+1}-\sout{i}}+\abs{\sout{i}-\sin{i}}\\
&\qquad+ \abs{\sin{i}-\sout{i-1}}+\abs{\sout{i-1}-\sin{i-1}},
\end{align*}
which is bounded below by 
\begin{align*}
&\abs{\sout{i+1}-\sin{i+1}}+\abs{\sin{i+1}-\sout{i}}+ \abs{\sin{i}-\sout{i-1}}+\abs{\sout{i-1}-\sin{i-1}}.
\end{align*}
Define $g\in \PLD$ to coincide with $f$ on $\I{i-1}{i+1}^c$ and to coincide with $f_{\mD}$ on $\I{i-1}{i+1}$. The total variation of $Dg$ on $(x_{i-1}-\delta, x_{i+1}+\delta)$ equals, for all $\delta$ sufficiently small,
\[
\abs{\sout{i+1}-s_i}+\abs{\sin{i-1}-s_i}.
\]
Using that 
\[
\abs{\sout{i+1}-s_i}\leq \abs{\sout{i+1}-\sin{i+1}}+\abs{\sin{i+1}-s_i}
\]
and
\[
\abs{\sin{i-1}-s_i}\leq \abs{\sin{i-1}-\sout{i-1}}+\abs{\sout{i-1}-s_i},
\]
we find that the difference between the total variation of $Df$ and $Dg$ on $(x_{i-1}-\delta, x_{i+1}+\delta)$ is bounded below by 
\begin{align*}
& \abs{\sin{i+1}-\sout{i}}-\abs{\sin{i+1}-s_i} + \abs{\sin{i}-\sout{i-1}}- \abs{s_i-\sout{i-1}}.
\end{align*}
Note that if $a,c\in \R$ and $\min\set{a,c}\leq b\leq \max\set{a,c}$, then we have
\[
\abs{c-a}-\abs{a-b} = \abs{b-c}.
\]
Hence, using our assumptions \eqref{E:zero-as} and \eqref{E:zero-as2}, we conclude that
\[
\abs{\sin{i}-\sout{i-1}}- \abs{s_i-\sout{i-1}} = \abs{\sin{i}-s_i}>0
\]
and that 
\[
\abs{\sin{i+1}-\sout{i}}-\abs{\sin{i+1}-s_i} = \abs{\sin{i+1}-s_i}\geq 0.
\]
The difference between the total variation of $Df$ and $Dg$ on $(x_{i-1}-\delta, x_{i+1}+\delta)$ is thus strictly positive for all $\delta$ sufficiently small. Since $f,g$ agree on $(x_{i-1},x_{i+1})^c$, we find that $\TV{Dg}<\TV{Df}$, contradicting the minimality of $\TV{Df}$ over $\PLD$.
\end{proof}

\noindent Our next result, Proposition \ref{P:ends}, ensures that $f$ and $f_{\mD}$ agree near infinity. 

\begin{proposition}\label{P:ends}
Suppose $f\in\RR$. Then for $x< x_2$ and $x>x_{m-1}$ we have that $f(x)=f_\mD(x)$.
\end{proposition}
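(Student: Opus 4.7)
My plan is to argue by contradiction, using the characterization $\RR = \set{f \in \PLD \mid \TV{Df} = \TV{Df_{\mD}}}$ from \eqref{E:rr-old}. By a symmetric construction using Corollary \ref{C:in-out} at $i = m-1$, it suffices to treat the left tail $x < x_2$. The strategy is to construct a cheaper competitor $g \in \PLD$ obtained by straightening $f$ on $(-\infty, x_2)$ and show that, unless $f \equiv \ell_1$ there already, $\TV{Dg} < \TV{Df}$.

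First I would introduce
\[
g(x) := \begin{cases} \ell_1(x), & x < x_2, \\ f(x), & x \geq x_2, \end{cases}
\]
which lies in $\PLD$ since $\ell_1(x_1) = y_1$ and $\ell_1(x_2) = y_2 = f(x_2)$. Writing $T_-$ for the total variation of $Df$ on $(-\infty, x_1)$ and invoking Proposition \ref{P:monotone} to collapse the contribution from $(x_1, x_2)$ to $\abs{\sin{2} - \sout{1}}$, for all sufficiently small $\delta > 0$ I would expand the total variation of $Df$ on $(-\infty, x_2 + \delta)$ as
\[
T_- + \abs{\sout{1} - \sin{1}} + \abs{\sin{2} - \sout{1}} + \abs{\sout{2} - \sin{2}},
\]
while the total variation of $Dg$ on the same interval is simply $\abs{\sout{2} - s_1}$, since $g$ has slope $s_1$ on $(-\infty, x_2)$.

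The heart of the argument is Corollary \ref{C:in-out} at $i = 1$: the quantities $\sin{2} - s_1$ and $\sout{1} - s_1$ are either both zero or have opposite signs. In the opposite-sign subcase one has the exact identity $\abs{\sin{2} - \sout{1}} = \abs{\sin{2} - s_1} + \abs{\sout{1} - s_1}$; substituting this and applying the triangle inequality $\abs{\sout{2} - \sin{2}} \geq \abs{\sout{2} - s_1} - \abs{\sin{2} - s_1}$ gives
\[
\TV{Df} - \TV{Dg} \geq T_- + \abs{\sout{1} - \sin{1}} + \abs{\sout{1} - s_1} > 0
\]
on $(-\infty, x_2 + \delta)$, contradicting minimality of $\TV{Df}$ over $\PLD$. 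In the both-zero subcase, $\sout{1} = \sin{2} = s_1$, and monotonicity of $Df$ on $(x_1, x_2)$ forces $Df \equiv s_1$ there; the two terms involving $\sout{2}$ then cancel and the difference reduces to $T_- + \abs{s_1 - \sin{1}} \geq 0$. Minimality forces this to vanish, giving $T_- = 0$ and $\sin{1} = s_1$, so $f \equiv \ell_1$ on $(-\infty, x_2)$.

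The hard part is really just recognizing that the sign-cancellation from Corollary \ref{C:in-out} converts the middle term $\abs{\sin{2} - \sout{1}}$ into a sum of two positive deviations from $s_1$; once this is in place, one triangle inequality on $\abs{\sout{2} - \sin{2}}$ produces the needed strict improvement. The right tail $x > x_{m-1}$ follows by replacing the competitor with the analogous function that coincides with $\ell_{m-1}$ to the right of $x_{m-1}$ and applying Corollary \ref{C:in-out} at $i = m - 1$.
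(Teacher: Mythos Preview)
Your argument is correct and follows essentially the same route as the paper: construct the competitor that replaces $f$ by $\ell_1$ on $(-\infty,x_2)$, use Corollary~\ref{C:in-out} at $i=1$ to control the sign structure of $\sout{1}-s_1$ and $\sin{2}-s_1$, and deduce a strict drop in total variation unless $f\equiv\ell_1$ there. The only organizational difference is that the paper first disposes of the tail $(-\infty,x_1)$ separately (forcing $T_-=0$ and $\sin{1}=\sout{1}$) before comparing on $(x_1,x_2)$, whereas you carry $T_-$ and $\abs{\sout{1}-\sin{1}}$ through a single computation; this is a matter of bookkeeping, not substance.
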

\begin{proof}
We focus on the analysis of $f$ on $(-\infty,x_2)$ since the conclusion on $(x_{m-1},\infty)$ follows by symmetry. To start note that $Df(x)=\sout{1}$ for all $x<x_1$. Indeed, if this were not the case, we could define $g\in \PLD$ to coincide with $f$ on $(x_1,\infty)$ but to have slope $\sout{1}$ on $(-\infty,x_1)$. This $g$ belongs to $\PLD$ and satisfies $\TV{Dg}<\TV{Df}$ since the total variation of its derivative on $(-\infty, x_1+\epsilon)^c$ equals that of $Df$ but the total variation of $Dg$ on $(-\infty, x_1+\epsilon)$ vanishes while that of $f$ is non-zero.

Thus, we see that $\sin{1}=\sout{1}$. Let us now prove that $f(x)=f_\mD(x)$ for $x\in (x_1,x_2)$. This will imply $\sout{1}=s_1$ and will complete the proof. Suppose for the sake of contradiction that $\sin{2}\neq s_1$. Then we have from Corollary \ref{C:in-out} that
\[
\min\set{\sout{1},\sin{2}}<s_1<\max\set{\sout{1},\sin{2}}.
\]
Define $g\in \PLD$ to coincide with $f$ on $(x_2,\infty)$ and with $f_\mD$ on $(-\infty,x_2)$. The total variation of $Dg$ on $(-\infty,x_2+\delta)$ for all $\delta$ sufficiently small is
\[
\abs{\sout{2}-s_1},
\]
whereas the total variation of $Df$ on the same interval is
\[
\abs{\sout{1}-\sin{2}}+\abs{\sin{2}-\sout{2}}.
\]
Since by construction $Df$ and $Dg$ agree on $(x_2,\infty),$ the following claim shows that $\TV{Df}>\TV{Dg}$, contradicting the minimality of $\TV{Df}$ over $\PLD$:
\begin{claim}
Suppose $a,b,c\in\R$ satisfy
\[
\min\set{a,b}<c<\max\set{a,b}.
\]
Then for any $d\in \R$ we have 
\[
\abs{d-c}<\abs{a-b}+\abs{b-d}
\]
\end{claim}
\begin{proof}
Suppose first $a<c<b$. Then
\begin{align*}
    \abs{a-b}+\abs{b-d} - \abs{d-c}&= b-a+\abs{b-d}-\abs{d-c}\\
    &=c-a+\abs{b-d}-(d-b)-\abs{d-c}+(d-c)\\
    &>0,
\end{align*}
as desired. Similarly, suppose $b<c<a$ then 
\begin{align}\label{E:comp}
    \abs{a-b}+\abs{b-d} - \abs{d-c}&= a-b +\abs{b-d}-\abs{d-c}.
\end{align}
If $d\geq c$ then $d>b$ and the right hand side of \eqref{E:comp} becomes
\[
a-b+\abs{b-d}-(d-c) = a-b+d-b-d+c =c-b +a-b>0.
\]
Finally, if $d\leq c$ then the right hand side of \eqref{E:comp} becomes
\[
a-b +\abs{d-b}-(c-d) = a-c + \abs{d-b}-(d-b) >0.
\]
This completes the proof.
\end{proof}
\end{proof}

Proposition \ref{P:ends} allows us to know the ``initial'' and ``final'' conditions $\sin{2}$ and $\sout{m-1}$ for the slopes of $f$. In contrast, Proposition \ref{P:through-i} below allows us to take information about the incoming slope $\sin{i}$ of $f$ at $x_i$ and use the local curvature information $\epsilon_i$ at $x_i$ to constrain the outgoing slope $\sout{i}$. See Figure \ref{fig:through_i}.

\begin{figure}
    \centering
    \includegraphics[width=\linewidth]{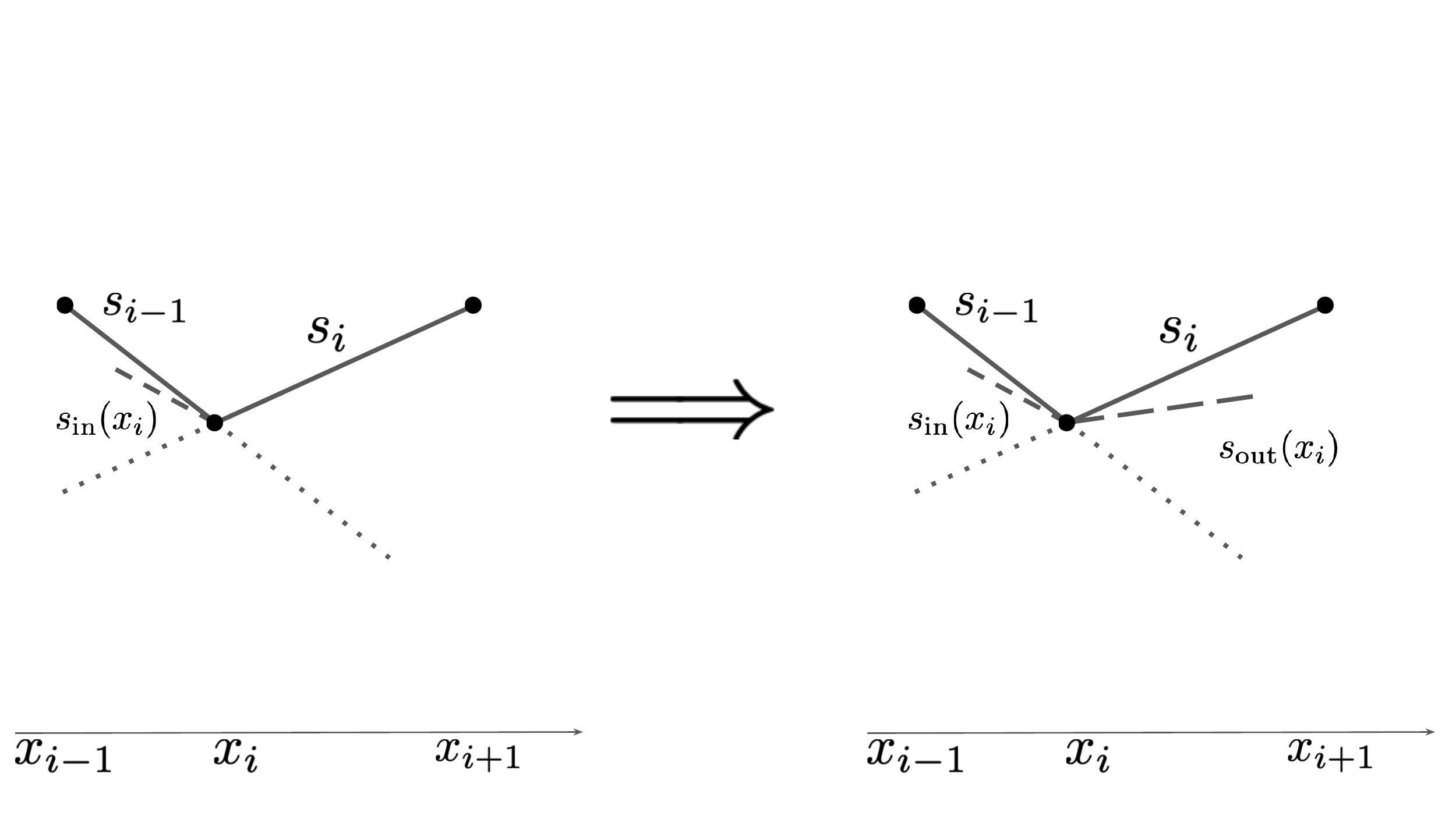}
    \caption{The conclusion of Proposition \ref{P:through-i} when $\epsilon_i=1$.}
    \label{fig:through_i}
\end{figure}

\begin{proposition}[How slope of $f$ changes at $x_i$]\label{P:through-i}
Suppose $\epsilon_i=1$. Then
\begin{equation}\label{E:s-prop-conv}
s_{i-1}\leq \sin{i}\leq s_{i}   \qquad \Longrightarrow \qquad s_{i-1}\leq \sin{i}\leq\sout{i}\leq s_{i}    
\end{equation}
Similarly, suppose $\epsilon_i = -1.$ Then
\begin{equation}\label{E:s-prop-conc}
  s_{i-1}\geq \sin{i}\geq s_{i}       \qquad \Longrightarrow \qquad  s_{i-1}\geq \sin{i}\geq\sout{i}\geq s_{i}    
\end{equation}
\end{proposition}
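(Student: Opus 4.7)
The plan is to handle the case $\epsilon_i=1$; the case $\epsilon_i=-1$ follows by applying the first case to $-f$ (which lies in $\mathrm{RidgelessReLU}(-\mD)$ and reverses all signs of slopes and curvatures). Assume the hypothesis $s_{i-1}\leq \sin{i}\leq s_i$. The leftmost inequality of the conclusion is automatic, so it remains to establish $\sin{i}\leq \sout{i}$ and $\sout{i}\leq s_i$. Both will be proved by contradiction via the same competitor construction: define $g\in\PLD$ to agree with $f$ outside $\I{i}{i+1}$ and to coincide with $\ell_i(x)=s_i(x-x_i)+y_i$ on $\I{i}{i+1}$. Since $\ell_i(x_i)=y_i$ and $\ell_i(x_{i+1})=y_{i+1}$, we have $g\in\PLD$, and for all sufficiently small $\delta>0$ the total variation of $Dg$ on $(x_i-\delta,x_{i+1}+\delta)$ is exactly $\abs{\sin{i}-s_i}+\abs{s_i-\sout{i+1}}$, since $Dg$ has only two jumps there.

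For the same window, Proposition \ref{P:monotone} tells us that $Df$ is monotone on $\I{i}{i+1}$, so its total variation over that subinterval equals $\abs{\sout{i}-\sin{i+1}}$, giving
\[
\text{Total Variation of }Df\text{ on }(x_i-\delta,x_{i+1}+\delta)=\abs{\sin{i}-\sout{i}}+\abs{\sout{i}-\sin{i+1}}+\abs{\sin{i+1}-\sout{i+1}}.
\]
Since $f=g$ outside this window, showing this quantity strictly exceeds $\abs{\sin{i}-s_i}+\abs{s_i-\sout{i+1}}$ will contradict $f\in\RR$. For Part 1, assume $\sout{i}>s_i$. Corollary \ref{C:in-out} forces $\sin{i+1}<s_i$, which combined with $\sin{i}\leq s_i$ determines the signs inside the first two absolute values on the $Df$ side and the first on the $Dg$ side. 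Applying the triangle inequality $\abs{s_i-\sout{i+1}}\leq(s_i-\sin{i+1})+\abs{\sin{i+1}-\sout{i+1}}$ to handle the unknown sign of $\sout{i+1}-s_i$, direct algebra yields $\TV{Df}-\TV{Dg}\geq 2(\sout{i}-s_i)>0$ on the window, the desired contradiction.

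For Part 2, assume $\sout{i}<\sin{i}\leq s_i$, so $\sout{i}<s_i$ and Corollary \ref{C:in-out} gives $\sin{i+1}>s_i$. The three absolute values on the $Df$ side open as $\sin{i}-\sout{i}$, $\sin{i+1}-\sout{i}$, and (via the same triangle inequality at $x_{i+1}$) a lower bound involving $-(\sin{i+1}-s_i)$; the $Dg$ jump at $x_i$ is $s_i-\sin{i}$. A parallel computation produces $\TV{Df}-\TV{Dg}\geq 2(\sin{i}-\sout{i})>0$, contradicting minimality. The main obstacle is simply the bookkeeping in these two algebraic cancellations: one must correctly identify the sign of each of $\sin{i}-s_i$, $\sout{i}-s_i$, and $\sin{i+1}-s_i$ from the hypothesis, the contradiction assumption, and Corollary \ref{C:in-out}, and then recognize that the triangle inequality at $x_{i+1}$ is the only nontrivial estimate needed because $f$ and $g$ share the same incoming slope $\sin{i}$ at $x_i$.
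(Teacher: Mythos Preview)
Your proof is correct, and Part 1 matches the paper's argument exactly: both use the competitor $g$ that replaces $f$ by the straight line $\ell_i$ on $(x_i,x_{i+1})$ and compare total variations via the triangle inequality at $x_{i+1}$.

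In Part 2, however, you take a different (and arguably cleaner) route. The paper, to rule out $\sout{i}<\sin{i}$, builds a \emph{different} competitor: on $(x_i,x_{i+1})$ it sets
\[
g(x)=\max\set{(x-x_i)\sin{i}+y_i,\ (x-x_{i+1})\sin{i+1}+y_{i+1}},
\]
a ``roof'' that matches the slopes $\sin{i}$ and $\sin{i+1}$ at the endpoints, so the jumps at $x_i$ and $x_{i+1}$ vanish and the TV difference on the window is exactly $2(\sin{i}-\sout{i})$. You instead reuse the linear competitor $\ell_i$ from Part 1 and absorb the extra endpoint jump via the same triangle inequality at $x_{i+1}$; the algebra still collapses to $\TV{Df}-\TV{Dg}\geq 2(\sin{i}-\sout{i})>0$, as I verified. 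Your approach buys uniformity (one competitor for both cases) at the cost of one additional inequality; the paper's approach is geometrically more transparent for Part 2 but requires introducing a second construction. Both are valid.
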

\begin{proof}
The proof of \eqref{E:s-prop-conc} is identical to that of \eqref{E:s-prop-conv}, and we therefore focus on proving the latter. That is, we fix $i=2,\ldots, m-1$ and assume $\epsilon_i=1$ and suppose that $s_{i-1}\leq \sin{i}\leq s_{i}$. For the sake of contradiction assume also that $\sout{i}> s_i$. By Corollary \ref{C:in-out} we have $\sin{i+1}<s_i$ and therefore the total variation of $Df$ on $(x_i-\epsilon,x_{i+1}+\epsilon)$ is 
\[
\abs{\sout{i+1}-\sin{i+1}} + 2\sout{i} - \sin{i+1}-\sin{i}. 
\]
Consider $g\in\PLD$ defined to be equal to $f$ on $(x_i,x_{i+1})^c$ and to $f_{\mD}$ on $\I{i}{i+1}$. The total variation of $Dg$ on $(x_i-\delta,x_{i+1}+\delta)$ for all $\delta$ sufficiently small is 
\[
\abs{\sout{i+1}-s_i} +s_i - \sin{i}.
\]
The following claim shows that the total variation of $Dg$ on  $(x_i-\delta,x_{i+1}+\delta)$ for all $\delta$ sufficiently small is strictly smaller than that of $Df$. Implies that $\TV{Dg}<\TV{Df}$, which is a contradiction.
\begin{claim}
Suppose $a,b,c,d\in \R$ with $\max\set{a,b}\leq c < d$. Then for all $x\in\R$ we have
\[
\abs{x-b}+2d-a-b > \abs{x-c}+c-a
\]
\end{claim}
\begin{proof}
Since $\abs{x-c}\leq \abs{x-d} + d-c$, we have
\[
\abs{x-b}+2d-a-b - ( \abs{x-c}+c-a )\geq d-b>0.
\]
\end{proof}
Next, again for the sake of contradiction, suppose that we still have $\epsilon_i =1$ and $s_{i-1}\leq \sin{i}\leq s_{i+1}$ but also that $\sout{i}<\sin{i}$. Then, by Corollary \ref{C:in-out} we have $\sin{i+1}>s_i$. Moreover, by Proposition \ref{P:monotone} the total variation of $Df$ on $(x_i-\delta,x_{i+1}+\delta)$ for all $\delta$ small enough is 
\[
\abs{\sout{i+1}-\sin{i+1}} + \sin{i+1}+\sin{i}-2\sout{i}.
\]
Consider $g\in\PLD$ defined to be equal to $f$ for $x\in(x_i,x_{i+1})^c$ but for  $x\in \I{i}{i+1}$ given by 
\[
g(x)=\max\set{(x-x_i)\sin{i} + y_i ,\, (x-x_{i-1})\sin{i+1}+y_{i+1}}.
\]
\begin{figure}
    \centering
    \includegraphics[width=.9\linewidth]{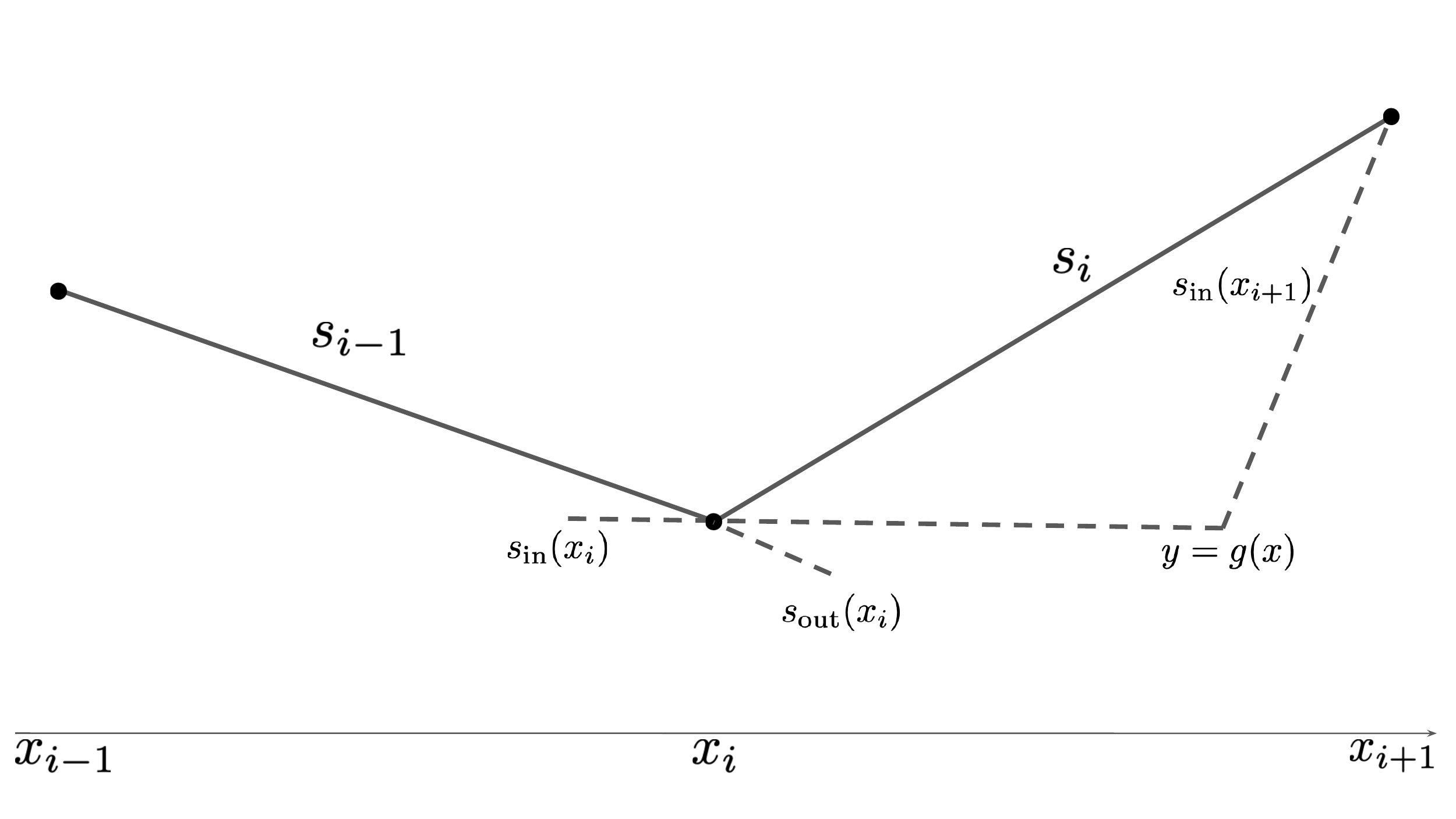}
    \caption{The function $g(x)$ used to derive a contradiction with the assumption that $\sout{i}<\sin{i}$ in Proposition \ref{P:through-i}.}
    \label{fig:through-i-g}
\end{figure}

\noindent See Figure \ref{fig:through-i-g}. The total variation of $Dg$ on $(x_i-\epsilon,x_{i+1}+\epsilon)$ is 
\[
\abs{\sout{i+1}-\sin{i+1}} + \sin{i+1}-\sin{i}.
\]
Therefore the difference between the total variation of $Df$ and $Dg$ on  $(x_i-\delta,x_{i+1}+\delta)$ is 
\[
2(\sin{i}-\sout{i})>0.
\]
Since $f$ and $g$ agree on $(x_{i},x_{i+1})^c$ this contradicts the minimality of $\TV{Df}$ in $\PLD$ and completes the proof of \eqref{E:s-prop-conv}. 
\end{proof}

Proposition \ref{P:through-i} allows us to translate information about the incoming slope $\sin{i}$ to outgoing information about $\sout{i}$. To make use of this, we also need a way to translate between outgoing information $\sout{i}$ and incoming information $\sin{i+1}$. This is done in the following Proposition, whose conclusion is illustrated in Figure \ref{fig:i-to-i+1}.
\begin{figure}
    \centering
    \includegraphics[width=\linewidth]{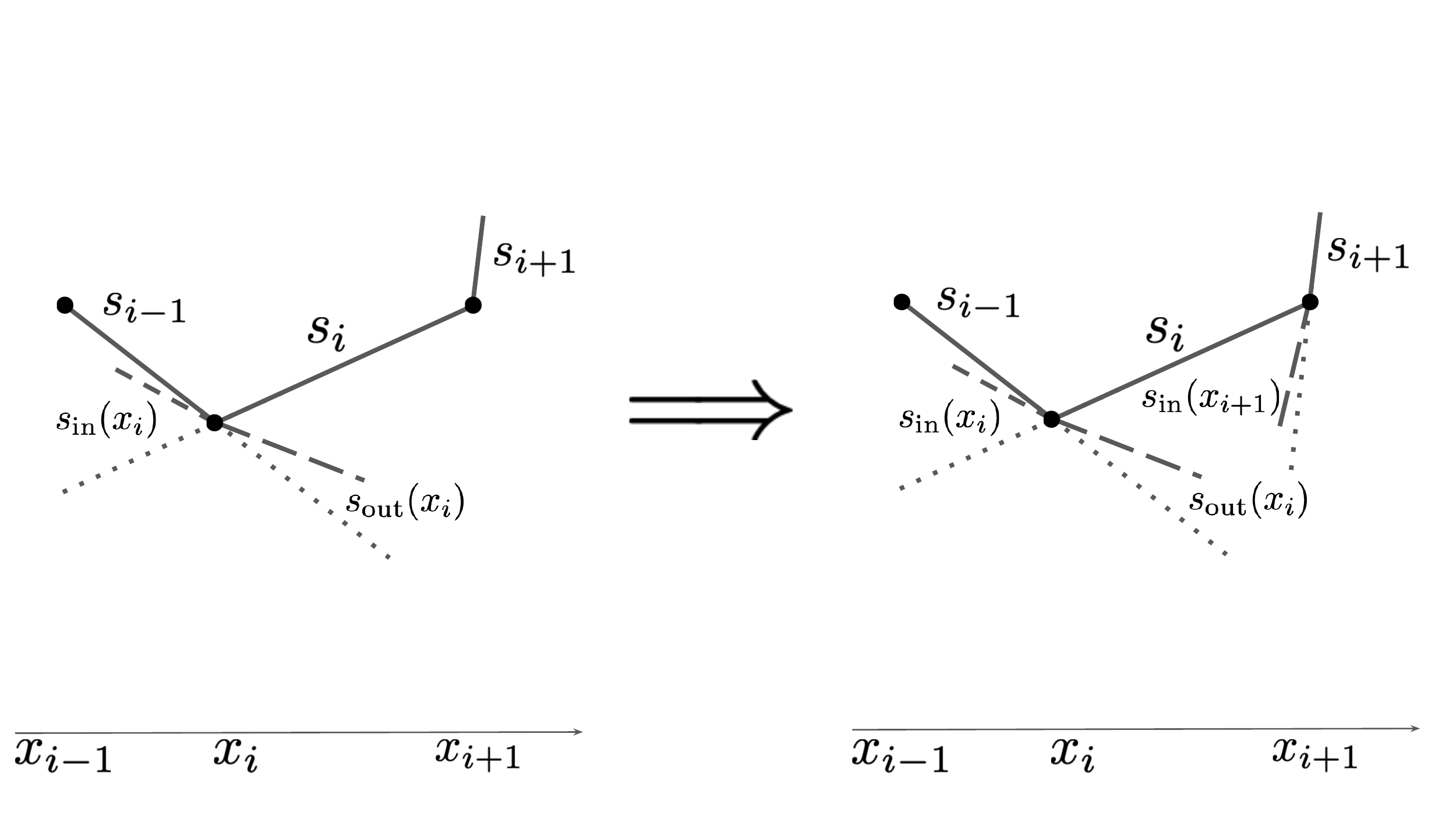}
    \caption{Illustration of the conclusion in Proposition \ref{P:i-to-i+1} when $\epsilon_i=\epsilon_{i+1}=1$.}
    \label{fig:i-to-i+1}
\end{figure}

\begin{proposition}[How slope of $f$ changes between $x_i$ and $x_{i+1}$ when $\epsilon_i,\, \epsilon_{i+1}$ agree]\label{P:i-to-i+1}
If $\epsilon_i=1$ and $s_{i-1}\leq \sin{i}\leq \sout{i}\leq s_i$, then
\begin{align}
 \label{E:same-prop}   \epsilon_{i+1} &= 1\qquad \Longrightarrow\qquad s_i\leq \sin{i+1}\leq s_{i+1}
\end{align}
Similarly, if $\epsilon_i=-1$ and $s_{i-1}\geq \sin{i}\geq \sout{i}\geq s_i$, then
\begin{align}
 \label{E:same-prop-2}   \epsilon_{i+1} &= -1\qquad \Longrightarrow\qquad s_i\geq \sin{i+1}\geq s_{i+1}
\end{align}
\end{proposition}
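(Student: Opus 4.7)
Since \eqref{E:same-prop-2} reduces to \eqref{E:same-prop} by replacing $f$ and the $y_i$ with their negatives (which flips the sign of every $\epsilon$ and every slope), it suffices to prove \eqref{E:same-prop}. Half of the conclusion is essentially free: Corollary \ref{C:in-out} at index $i$ forces $\mathrm{sgn}(\sin{i+1}-s_i)+\mathrm{sgn}(\sout{i}-s_i)=0$, so the standing hypothesis $\sout{i}\leq s_i$ immediately gives $\sin{i+1}\geq s_i$. The real content of the statement is therefore the upper bound $\sin{i+1}\leq s_{i+1}$.

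My plan is to argue by contradiction. Assume $\sin{i+1}>s_{i+1}$; combined with $\epsilon_{i+1}=1$, i.e.\ $s_{i+1}>s_i$, this gives the strict chain $\sin{i+1}>s_{i+1}>s_i$. I will exhibit a competitor $g\in\PLD$ with $\TV{Dg}<\TV{Df}$, contradicting Theorem \ref{T:prior}. Mirroring the straightening constructions in Propositions \ref{P:neighbors}, \ref{P:ends}, and \ref{P:through-i}, the natural candidate is to set $g=f_{\mD}$ on $[x_i,x_{i+2}]$ and $g=f$ elsewhere; this lies in $\PLD$ since $f$ already interpolates the data at $x_i,x_{i+1},x_{i+2}$.

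The core computation compares $\TV{Df}$ and $\TV{Dg}$ on $(x_i-\tau,x_{i+2}+\tau)$ for small $\tau>0$. Using Proposition \ref{P:monotone}, the variation of $Df$ on $(x_i,x_{i+1})$ equals $\sin{i+1}-\sout{i}$ (monotone, starting at $\sout{i}\leq s_i$ and ending at $\sin{i+1}>s_{i+1}$), while on $(x_{i+1},x_{i+2})$ monotonicity together with the interpolation constraint places $s_{i+1}$ between $\sout{i+1}$ and $\sin{i+2}$, so that variation splits as $|\sin{i+2}-\sout{i+1}|=|\sin{i+2}-s_{i+1}|+|s_{i+1}-\sout{i+1}|$. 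Two applications of the triangle inequality — routing $\sout{i+2}$ through $\sin{i+2}$ and then routing $\sout{i+1}$ through both $s_{i+1}$ and $\sin{i+1}$ — collapse the difference to
\[
\TV{Df}-\TV{Dg}\;\geq\;2\bigl(\sin{i+1}-s_{i+1}\bigr)\;>\;0,
\]
which is the desired contradiction.

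The main delicate point is arranging a \emph{strict} inequality in every configuration of $\sout{i+1}$ relative to $\sin{i+1}$ and $s_{i+1}$. In the degenerate case $\sout{i+1}=\sin{i+2}=s_{i+1}$, where $f$ is already affine on $(x_{i+1},x_{i+2})$, the shorter modification of straightening only on $[x_{i+1},x_{i+2}]$ produces a difference of exactly zero; extending the straightening back to $x_i$ is essential, because the monotone rise of $Df$ on $(x_i,x_{i+1})$ from $\sout{i}\leq s_i$ up to $\sin{i+1}>s_{i+1}$ contributes the extra slack of size $\sin{i+1}-s_{i+1}$ that forces the strict gap.
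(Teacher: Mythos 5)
Your proof is correct, and it takes a structurally different route from the paper's. Both arguments extract $\sin{i+1}\geq s_i$ from Corollary \ref{C:in-out} and then suppose $\sin{i+1}>s_{i+1}$ to build a competitor in $\PLD$ with strictly smaller total variation. The paper, however, first proves the auxiliary Lemma \ref{L:i-to-i+1} ($\sout{i+1}<\sin{i+1}$), itself by straightening $f$ to $f_{\mD}$ on $(x_i,x_{i+2})$ in the sub-case $\sout{i+1}\geq \sin{i+1}$, and then uses a second, differently shaped competitor confined to the single cell $\I{i}{i+1}$, namely $g(x)=\max\set{(x-x_{i+1})s_*+y_{i+1},\,(x-x_i)\sout{i}+y_i}$ with $s_*=\max\set{s_{i+1},\sout{i+1}}$. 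You instead use the two-cell straightening $g=f_{\mD}$ on $[x_i,x_{i+2}]$ once and for all, and the step that lets you dispense with any case analysis on $\sout{i+1}$ is the splitting $\abs{\sin{i+2}-\sout{i+1}}=\abs{\sin{i+2}-s_{i+1}}+\abs{s_{i+1}-\sout{i+1}}$, which is legitimate because Proposition \ref{P:monotone} together with interpolation places $s_{i+1}$ between $\sout{i+1}$ and $\sin{i+2}$ (the same mean-slope observation the paper exploits in Corollary \ref{C:in-out} and Lemma \ref{L:ridgeless-pl}). I checked the bookkeeping: with the jump at $x_i$ equal to $\sout{i}-\sin{i}$ for $f$ and $s_i-\sin{i}$ for $g$, the difference of total variations over $(x_i-\tau,x_{i+2}+\tau)$ is $(\sin{i+1}-s_{i+1})+\abs{\sout{i+1}-\sin{i+1}}+\abs{s_{i+1}-\sout{i+1}}+\abs{\sin{i+2}-s_{i+1}}+\abs{\sout{i+2}-\sin{i+2}}-\abs{\sout{i+2}-s_{i+1}}$, and two triangle inequalities give the bound $2(\sin{i+1}-s_{i+1})>0$, exactly as you claim; your negation argument reducing \eqref{E:same-prop-2} to \eqref{E:same-prop} is also fine. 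What each approach buys: yours is shorter and uniform (one competitor, one computation, no Lemma \ref{L:i-to-i+1}); the paper's keeps the decisive modification localized to the single interval $\I{i}{i+1}$ and isolates the statement that the slope must drop across $x_{i+1}$ as a standalone lemma, at the price of a case split and an extra construction.
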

\begin{proof}
The relation \eqref{E:same-prop-2} follows in the same way as \eqref{E:same-prop}, and so we focus on showing the latter. That is, we suppose $\epsilon_i=\epsilon_{i+1}=1$ and that $s_{i-1}\leq \sin{i}\leq \sout{i}\leq s_i$. Corollary \ref{C:in-out} immediately gives $\sin{i+1}\geq s_i$. To complete the proof of \eqref{E:same-prop} let us suppose for the sake of contradiction that in fact $\sin{i+1}> s_{i+1}$. To derive a contradiction, we need the following observation. 
\begin{lemma}\label{L:i-to-i+1}
Suppose that we have $\epsilon_{i+1}=1$ and $\sin{i+1}>s_{i+1}$. Then we must have $\sout{i+1}<\sin{i+1}$. 
\end{lemma}
\begin{proof}
If $i=m-2$, then the conclusion follows immediately from the fact that by Proposition \ref{P:ends} we have $\sout{i+1}=s_m$. If $i<m-2$, let us suppose for the sake of contradiction that $\sout{i+1}\geq \sin{i+1}$. In particular, we have $\sout{i+1}>s_{i+1}$. Hence, by Corollary \ref{C:in-out} we have
\[
\sin{i+2}<s_{i+1}.
\]
Also by Corollary \ref{C:in-out} since $\sin{i+1}>s_{i+1}>s_i$ we have
\[
\sout{i}<s_i.
\]
\begin{figure}
    \centering
    \includegraphics[width=\linewidth]{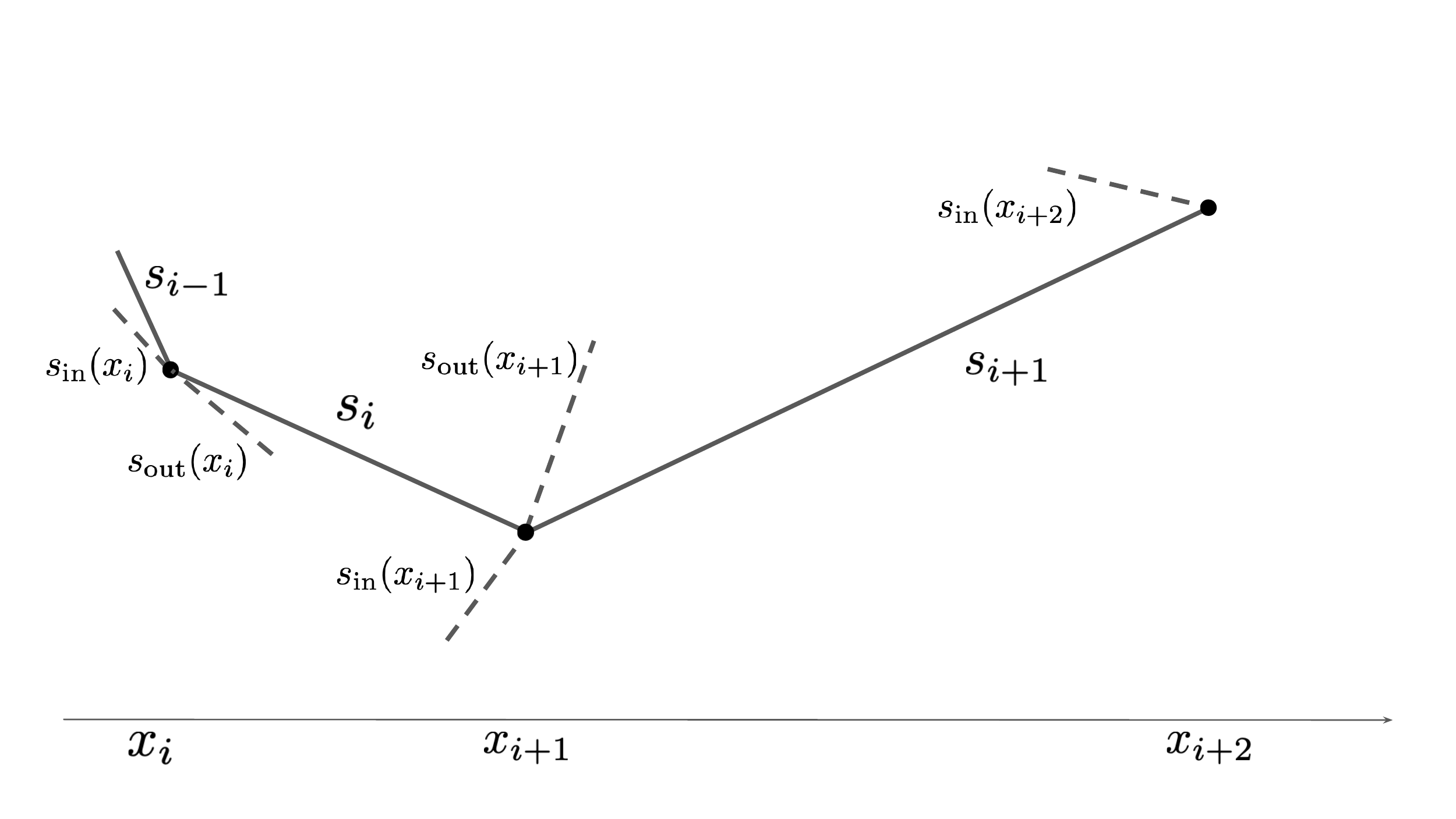}
    \caption{Illustration of hypotheses for contradiction in Lemma \ref{L:i-to-i+1}.}
    \label{fig:i-to-i+1-g}
\end{figure}
See Figure \ref{fig:i-to-i+1-g}. The total variation of $Df$ on $(x_i-\delta,x_{i+2}+\delta)$ for $\delta$ sufficiently small is therefore
\[
\abs{\sout{i+2}-\sin{i+2}} + 2\sout{i+1} - \sin{i+2}-\sin{i} 
\]
Consider $g\in \PLD$ that coincides with $f$ on $(x_{i},x_{i+2})^c$ and with $f_{\mD}$ on $(x_{i},x_{i+2})$. The total variation of $Dg$ on $(x_i-\delta,x_{i+2}+\delta)$ for $\delta$ sufficiently small is 
\[
\abs{\sout{i+2}-s_{i+1}} + s_{i+1}-\sin{i} 
\]
Using that $\abs{\sout{i+2}-s_{i+1}}\leq \abs{\sout{i+2}-\sin{i+2}} + s_{i+1}- \sin{i+2}$, we conclude that the difference between the total variation of $Df$ and $Dg$ is bounded below by 
\[
2\lr{\sout{i+1}-s_{i+1}}>0,
\]
contradicting the minimality of $\TV{Df}$. 
\end{proof}

Returning now to the proof of \eqref{E:same-prop}, we continue to assume that $s_{i-1}\leq \sin{i}\leq \sout{i}\leq s_i$ and $\sin{i+1}>s_{i+1}$. The previous Lemma ensures that therefore
\[
\sin{i+1}>s_*:=\max\set{s_{i+1},\, \sout{i+1}}.
\]
From this last condition we conclude that the total variation of $Df$ on $(x_i-\delta, x_{i+1}+\delta)$ for all $\delta$ sufficiently small is
\[
2\sin{i+1} -\sin{i}-\sout{i+1}.
\]
Consider $g\in\PLD$ defined to be equal to $f$ on $(x_i,x_{i+1})^c$ but on $\I{i}{i+1}$ given by 
\[
g(x) = \max\set{(x-x_{i+1})s_* + y_{i+1},\, (x-x_i)\sout{i}+y_i},\qquad x\in \I{i}{i+1}.
\]
\begin{figure}
    \centering
    \includegraphics[width=\linewidth]{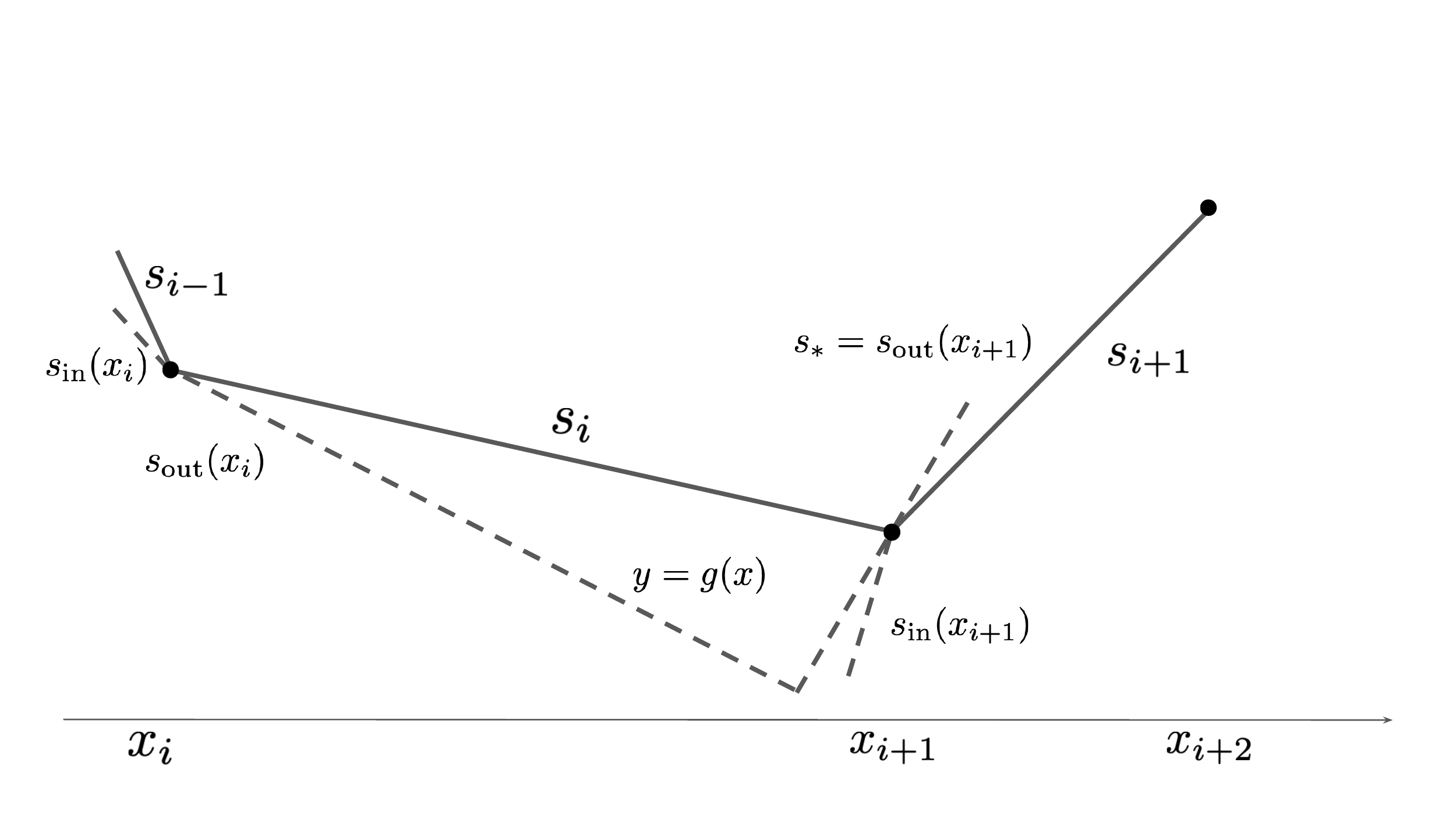}
    \caption{Illustration of the function $g$ used for contradiction at the end of the proof of Proposition \ref{P:i-to-i+1}.}
    \label{fig:i-to-i+1-g2} 
\end{figure}
See Figure \ref{fig:i-to-i+1-g2}. Since $s_*\geq s_{i+1}> s_i$, we find that the total variation of $Dg$ on $(x_i-\delta, x_{i+1}+\delta)$ for all $\delta$ small enough equals
\[
s_*-\sin{i}.
\]
The difference of the total variation of $Df$ and $Dg$ on $(x_i-\delta, x_{i+1}+\delta)$ is therefore given by
\[
\sin{i+1}-s_{i+1} + \sin{i+1}-s_*>0.
\]
This contradicts the minimality of $\TV{Df}$ among $\PLD$ and completes the proof of \eqref{E:same-prop}.

\end{proof}

Proposition \ref{P:i-to-i+1} showed how to use  information about the incoming and outgoing slopes of $f$ at $x_i$ to obtain information on the incoming slop at $x_{i+1}$ if $\epsilon_i=\epsilon_{i+1}$. The following Proposition explains how to do this if instead $\epsilon_i\neq \epsilon_{i+1}$. 

\begin{proposition}[How slope of $f$ changes between $x_i$ and $x_{i+1}$ when $\epsilon_i,\, \epsilon_{i+1}$ disagree]\label{P:i-to-i+1-op}
If $\epsilon_i=1$ and $s_{i-1}\leq \sin{i}\leq \sout{i}\leq s_i$, then
\begin{align}
 \label{E:op-prop}   \epsilon_{i+1} &= -1\qquad \Longrightarrow\qquad \sout{i}=\sin{i+1}=s_i.
\end{align}
Similarly, if $\epsilon_i=-1$ and $s_{i-1}\geq \sin{i}\geq \sout{i}\geq s_i$, then
\begin{align}
\label{E:op-prop-2}   \epsilon_{i+1} &= 1\qquad \Longrightarrow\qquad \sout{i}=\sin{i+1}=s_i.
\end{align}
\end{proposition}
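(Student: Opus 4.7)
The plan is to prove \eqref{E:op-prop} and note that \eqref{E:op-prop-2} follows by an identical sign-flipped argument. Fix $f\in\RR$ satisfying $\epsilon_i=1$, $\epsilon_{i+1}=-1$, and $s_{i-1}\leq\sin{i}\leq\sout{i}\leq s_i$. Once $\sout{i}=s_i$ is established, Corollary \ref{C:in-out} immediately forces $\sin{i+1}=s_i$, so the whole statement reduces to ruling out $\sout{i}<s_i$. Assume for contradiction that $\sout{i}<s_i$. Corollary \ref{C:in-out} then yields $\sin{i+1}>s_i$, and since $\epsilon_{i+1}=-1$ means $s_{i+1}<s_i$, we obtain the strict chain $\sin{i+1}>s_i>s_{i+1}$. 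The rest of the argument contradicts the minimality of $\TV{Df}$ over $\PLD$ from Theorem \ref{T:prior}, split into two cases according to the sign of $\sin{i+1}-\sout{i+1}$.

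If $\sout{i+1}<\sin{i+1}$, I would define $g\in\PLD$ by $g=f$ on $\I{i}{i+1}^c$ and $g=\ell_i$ on $\I{i}{i+1}$. Proposition \ref{P:monotone} shows that $Df$ is monotone on $\I{i}{i+1}$, forced to be non-decreasing by $\sout{i}<s_i<\sin{i+1}$, and a direct computation gives that the difference of total variations on $(x_i-\delta,x_{i+1}+\delta)$ equals $2(\sin{i+1}-\max\set{s_i,\sout{i+1}})$, which is strictly positive in this case. If instead $\sout{i+1}\geq\sin{i+1}$, the same one-interval straightening produces exactly zero net change in $\TV{Df}$, so a larger modification is required. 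Observe that here $\sout{i+1}>s_{i+1}$, whence a second application of Corollary \ref{C:in-out} on $\I{i+1}{i+2}$ forces $\sin{i+2}<s_{i+1}$. I would then let $g$ agree with $f$ outside $\I{i}{i+2}$ and with $f_\mD$, namely $\ell_i$ on $\I{i}{i+1}$ and $\ell_{i+1}$ on $\I{i+1}{i+2}$, on the interior. Expanding $\TV{Df}-\TV{Dg}$ on $(x_i-\delta,x_{i+2}+\delta)$ via Proposition \ref{P:monotone} and checking the three sub-cases for the relative positions of $\sout{i+2}$, $\sin{i+2}$, and $s_{i+1}$, the difference simplifies to $2(\sout{i+1}-s_i)$ plus a nonnegative remainder, hence is strictly positive.

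The main obstacle is precisely the degenerate case $\sout{i+1}\geq\sin{i+1}$: on the single interval $\I{i}{i+1}$ the cost paid to straighten $f$ to $\ell_i$ at $x_i$ is exactly refunded by the compression of the jump at $x_{i+1}$, so a one-interval modification is not strictly beneficial. The resolution is to exploit the sign flip $\epsilon_{i+1}=-1$ one interval further to the right, which through Corollary \ref{C:in-out} produces the strict gap $\sin{i+2}<s_{i+1}$; this gap is exactly what lets the two-interval replacement by $f_\mD$ strictly beat $f$. The boundary case $i+2=m$ is handled automatically by Proposition \ref{P:ends}, which forces $\sout{i+2}=s_{m-1}=s_{i+1}$ and collapses the remainder term to zero.
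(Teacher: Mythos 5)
Your proof is correct and takes essentially the same route as the paper: assume for contradiction that $\sout{i}<s_i$, use Corollary \ref{C:in-out} to get $\sin{i+1}>s_i$, and contradict the minimality of $\TV{Df}$ with the same two surgeries (straightening $f$ to $\ell_i$ on $(x_i,x_{i+1})$, or replacing $f$ by $f_{\mD}$ on $(x_i,x_{i+2})$ after extracting $\sin{i+2}<s_{i+1}$ from a second application of Corollary \ref{C:in-out}). The only difference is that you split cases by comparing $\sout{i+1}$ with $\sin{i+1}$ instead of with $s_i$ as the paper does, which shifts the intermediate regime $s_i<\sout{i+1}<\sin{i+1}$ from the two-interval surgery to the one-interval one but changes nothing essential.
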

\begin{proof}
Relations \eqref{E:op-prop} and \eqref{E:op-prop-2} are proved in the same way, and so we focus on the former. To show \eqref{E:op-prop}, we suppose $\epsilon_i=1,\, \epsilon_{i+1}=-1$ and that $s_{i-1}\leq \sin{i}\leq \sout{i}\leq s_i$. Suppose for the sake of contradiction that $\sout{i}<s_i$. Then, by Corollary \ref{C:in-out} we have $\sin{i+1}>s_i$. To see why this cannot occur, we give somewhat different arguments depending on whether $\sout{i+1}>s_i$ or $\sout{i+1}\leq s_i$.

Let us first suppose $\sout{i+1}>s_i$. By Corollary \ref{C:in-out} we have $\sin{i+2}<s_{i+1}$. Thus, the total variation of $Df$ on $(x_i-\delta,x_{i+2}+\delta)$ equals
\[
\abs{\sout{i+2}-\sin{i+2}} + \sout{i+2}-\sin{i+2}+\abs{\sout{i+1}-\sin{i+1}} + \sin{i+1}- \sin{i},
\]
which is bounded below by
\[
\abs{\sout{i+2}-\sin{i+2}} + 2\sout{i+1}-\sin{i+2} - \sin{i}.
\]
Define $g\in \PLD$ to coincide with $f$ on $(x_i,x_{i+2})^c$ and with $f_{\mD}$ on $(x_i,x_{i+2})$. The total variation of $Dg$ on $(x_i-\delta,x_{i+2}+\delta)$ is 
\[
\abs{\sout{i+2}-\sin{i+2}}+2s_i - \sin{i+2} - \sin{i}.
\]
Hence, the difference between the total variation of $Df$ and $Dg$ is bounded below by 
\[
2(\sout{i+1}-s_i)>0.
\]
This contradicts the minimality of $\TV{Df}$. Let us now consider the other case: $\sout{i+1}\leq s_i.$ In this case, we have that $\sin{i+1}>\sout{i+1}$. Thus, the total variation of $Df$ on $(x_i-\delta,x_{i+1}+\delta)$ is 
\[
2\sin{i+1} - \sin{i}-\sout{i+1}.
\]
Define $g\in \PLD$ to coincide with $f$ on $(x_i,x_{i+1})^c$ and with $f_{\mD}$ on $(x_i,x_{i+1})$. The total variation of $Dg$ on $(x_i-\delta,x_{i+1}+\delta)$ is 
\[
2s_i - \sout{i+1}-\sin{i}.
\]
Hence, the difference between the total variation of $Df$ and $Dg$ is bounded below by 
\[
2(\sin{i+1}-s_i)>0.
\]
This contradicts the minimality of $\TV{Df}$, completing the proof of Proposition \ref{P:i-to-i+1-op}.
\end{proof}

We are now ready to show that any $f\in \RR$ satisfies (1) and (2). We already know from Propositions  \ref{P:neighbors} and \ref{P:ends} that $f$ satisfies properties (1a) and (1b). In order to check that $f$ satisfies (1c) and (2), we will use the following result. 

\begin{lemma}\label{L:epsilon-slopes}
Suppose $f\in \RR$. For $i=2,\ldots,m-1$ we have
\begin{align*}
\epsilon_i = 1 \qquad &\Longrightarrow\qquad s_{i-1}\leq\sin{i}\leq\sout{i}\leq s_i\\
\epsilon_i = -1 \qquad &\Longrightarrow\qquad s_{i-1}\geq\sin{i}\geq\sout{i}\geq s_i
\end{align*}
\end{lemma}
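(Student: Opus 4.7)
The plan is to prove the slightly stronger statement that for each $i=2,\ldots,m$, the incoming slope $\sin{i}$ lies in the range dictated by $\epsilon_i$: namely $s_{i-1}\leq \sin{i}\leq s_i$ if $\epsilon_i=1$, $s_{i-1}\geq \sin{i}\geq s_i$ if $\epsilon_i=-1$, and $\sin{i}=s_{i-1}=s_i$ if $\epsilon_i=0$. Once this is known, an application of Proposition \ref{P:through-i} in the case $\epsilon_i=\pm 1$ immediately yields the lemma. I will prove this stronger claim by induction on $i$.

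For the base case $i=2$, Proposition \ref{P:ends} gives $f\equiv f_{\mD}$ on $(-\infty,x_2)$, so $\sin{2}=s_1$. Whichever value $\epsilon_2$ takes, the appropriate range condition is immediately satisfied: for instance, if $\epsilon_2=1$ then $s_1<s_2$ and the chain $s_1\leq\sin{2}=s_1\leq s_2$ holds; the other two cases are analogous.

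For the inductive step, suppose the claim holds for some $i$; I want to deduce it for $i+1$. The argument splits into cases based on $\epsilon_i$ and $\epsilon_{i+1}$. If $\epsilon_i=0$, Proposition \ref{P:neighbors} applied at $i$ yields $\sout{i}=\sin{i+1}=s_{i-1}=s_i$ directly. If instead $\epsilon_i=\pm 1$, the inductive hypothesis places $\sin{i}$ in the correct range, so Proposition \ref{P:through-i} extends this control to $\sout{i}$. To transfer the range bound from the pair $(\sin{i},\sout{i})$ to $\sin{i+1}$, I use one of three propositions depending on $\epsilon_{i+1}$: Proposition \ref{P:i-to-i+1} when $\epsilon_i=\epsilon_{i+1}$, Proposition \ref{P:i-to-i+1-op} when $\epsilon_i\neq \epsilon_{i+1}$ and both are nonzero (which places $\sin{i+1}$ at the common value $s_i$, matching both the $\epsilon_{i+1}=1$ and $\epsilon_{i+1}=-1$ range conditions as a boundary case), and Proposition \ref{P:neighbors} applied at $i+1$ when $\epsilon_{i+1}=0$ (which forces $\sin{i+1}=s_i=s_{i+1}$). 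In each sub-case, $\sin{i+1}$ satisfies the range condition dictated by $\epsilon_{i+1}$, closing the induction.

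There is no real obstacle in this proof; the argument is entirely a bookkeeping exercise that threads the previously established propositions together. The only subtlety is remembering that the $\epsilon=0$ slots must be handled by Proposition \ref{P:neighbors} rather than by the forward-slope propositions, which were only stated for $\epsilon\in\{\pm 1\}$. Concluding the lemma is then immediate: for any $i$ with $\epsilon_i=1$, the inductive claim gives $s_{i-1}\leq \sin{i}\leq s_i$, whence Proposition \ref{P:through-i} upgrades this to $s_{i-1}\leq\sin{i}\leq\sout{i}\leq s_i$, and symmetrically for $\epsilon_i=-1$.
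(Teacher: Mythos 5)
Your proposal is correct and follows essentially the same route as the paper: an induction on $i$ that starts from $\sin{2}=s_1$ via Proposition \ref{P:ends} and then threads Propositions \ref{P:neighbors}, \ref{P:through-i}, \ref{P:i-to-i+1}, and \ref{P:i-to-i+1-op} to propagate the slope constraints from $x_i$ to $x_{i+1}$. If anything, your case analysis is more explicit than the paper's (which only spells out the sub-case $\epsilon_i\neq 1$ when $\epsilon_{i+1}=1$), and aside from the harmless indexing slip of stating the auxiliary claim up to $i=m$ where $\epsilon_m$ is undefined, there is nothing to fix.
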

\begin{proof}
We induct on $i$. When $i=2$, we have from Proposition \ref{P:ends} that 
\[
s_1 = \sin{2}. 
\]
If $\epsilon_2=1$, we may therefore apply Proposition \ref{P:through-i} to conclude that $s_1\leq \sin{1}\leq \sout{2}\leq s_2$, as desired. The case $\epsilon_2=-1$ is similar, completing the base case. Let us now suppose we have  the claim for $2,\ldots,i$. Suppose that $\epsilon_{i+1}=1$ (the case $\epsilon_{i+1}=-1$ is similar). If $\epsilon_{i}\neq 1$, then we conclude from the definition of $\epsilon_{i+1}=1$, the inductive hypothesis, and Propositions \ref{P:neighbors} and \ref{P:i-to-i+1-op} that 
\[
s_i=\sin{i+1}\leq s_{i+1}
\]
Hence, we may apply Proposition \ref{P:through-i} to conclude that $s_i=\sin{i+1}\leq \sout{i+1}\leq s_{i+1}$, as desired. This completes the inductive step and hence the proof of this Lemma.
\end{proof}

Lemma \ref{L:epsilon-slopes} in combination with Corollary \ref{C:in-out} immediately implies that $f$ satisfies property (2). Finally, in combination with Proposition \ref{P:i-to-i+1-op}, Lemma \ref{L:epsilon-slopes} also shows that $f$ satisfies property (1c). This completes the proof that $f\in \RR$ satisfies properties (1) and (2). It  remains to show that every $f$ which satisfies Properties (1) and (2) belongs to $\RR$, which we now establish.

\begin{proposition}\label{P:one-way}
Suppose $f\in \PLD$ satisfies conditions (1) and (2) of Theorem \ref{T:interpolants}. Then, $f$ belongs to $ \mathrm{RidgelessReLU}(\mD)$.
\end{proposition}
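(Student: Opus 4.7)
The plan is to appeal to Theorem \ref{T:prior}, which identifies $\RR$ with the subset of $\PLD$ minimizing $\TV{Df}$ (with minimum value $\TV{Df_{\mD}}$). Thus it suffices to verify that any $f \in \PLD$ satisfying (1) and (2) has $\TV{Df} = \TV{Df_{\mD}}$; since $\TV{Df_{\mD}}$ is already known to be minimal, the nontrivial direction is $\TV{Df} \leq \TV{Df_{\mD}}$.

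First I would partition $\R$ into two types of intervals: (i) maximal intervals on which $f = f_{\mD}$, forced by (1a)--(1c); and (ii) maximal intervals $[x_i, x_{i+q}]$ with $\epsilon_i = \cdots = \epsilon_{i+q} \in \{+1, -1\}$, on which $f$ is convex or concave by (2). On type (i) intervals $Df$ and $Df_{\mD}$ coincide, so their contributions to the total variation match trivially.

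The main computation is for a convex run $[x_i, x_{i+q}]$. I would show that $Df$ is non-decreasing on a slight enlargement $[x_i - \delta, x_{i+q} + \delta]$: interior monotonicity is immediate from convexity in (2). At the left endpoint $x_i$, condition (1) forces $f = f_{\mD}$ on the adjacent left interval (via (1a) if $i = 2$, via (1b) or (1c) if $\epsilon_{i-1} \ne \epsilon_i$), so $\sin{i} = s_{i-1}$; the bound $f \ge \ell_{i-1}$ on $(x_i, x_{i+q})$ from (2), combined with $f(x_i) = y_i$ and convexity, yields $\sout{i} \ge s_{i-1} = \sin{i}$, so the jump at $x_i$ is non-negative. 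A symmetric argument at $x_{i+q}$ using $f \ge \ell_{i+q}$ gives $\sin{i+q} \le s_{i+q} = \sout{i+q}$. Hence $Df$ is non-decreasing across the entire enlarged interval, and telescoping gives
\[
\sout{i+q} - \sin{i} = s_{i+q} - s_{i-1} = \sum_{j=i}^{i+q}(s_j - s_{j-1}) = \sum_{j=i}^{i+q}|s_j - s_{j-1}|,
\]
where the absolute values drop because $\epsilon_j = 1$ throughout the run. The left-hand side is the total variation of $Df$ on $[x_i - \delta, x_{i+q} + \delta]$ (by monotonicity), while the right-hand side is the total variation of $Df_{\mD}$ on the same interval (by direct inspection of the jumps of $Df_{\mD}$ at the datapoints $x_i, x_{i+1}, \ldots, x_{i+q}$). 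Concave runs are handled symmetrically, and summing the matched contributions over the partition yields $\TV{Df} = \TV{Df_{\mD}}$, hence $f \in \RR$ by Theorem \ref{T:prior}.

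The hardest part will be the bookkeeping at the partition boundaries: arranging things so that each atomic jump of $D^2 f$ is counted in exactly one interval, and verifying that condition (2)'s support-line inequalities really do force the non-negative boundary jumps $\sout{i} - \sin{i} \ge 0$ and $\sout{i+q} - \sin{i+q} \ge 0$ that make the telescoping identity valid. Edge cases where a run abuts $x_2$ or $x_{m-1}$ are absorbed via (1a), which supplies the correct exterior slopes $s_1$ and $s_{m-1}$ in place of $s_{i-1}$ or $s_{i+q}$.
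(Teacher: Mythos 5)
Your proposal is correct and follows essentially the same route as the paper: both decompose $\R$ into regions where property (1) forces $f=f_{\mD}$ and maximal convex/concave runs from property (2), use the pinned boundary slopes $\sin{i}=s_{i-1}$, $\sout{i+q}=s_{i+q}$ to telescope the monotone slopes and conclude $\TV{Df}=\TV{Df_{\mD}}$, and then invoke the total-variation characterization of $\RR$ (Theorem \ref{T:prior}, restated as Lemma \ref{L:ridgeless-pl} in the paper). The only cosmetic difference is that you argue the non-negativity of the boundary jumps explicitly from the support-line inequalities, whereas the paper organizes the same telescoping via the inflection set $\mI$.
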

\begin{proof}
Define the set $\mI\subseteq\set{1,\ldots, m}$ of discrete inflection points for the connect-the-dots interpolant $f_{\mD}$ (see Figure \ref{fig:i-fig}):
\begin{figure}
    \centering
    \includegraphics[width=\linewidth]{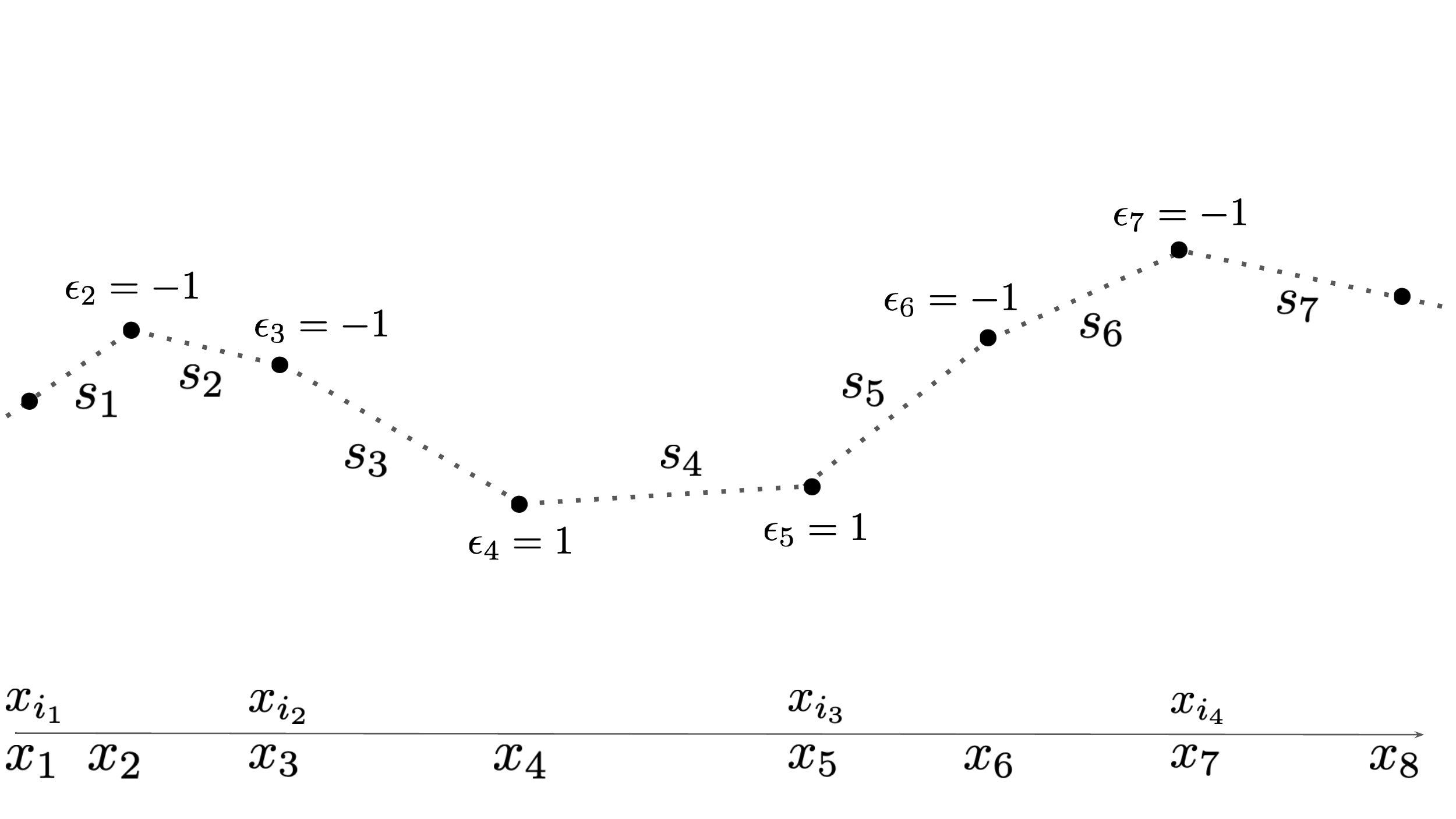}
    \caption{Illustration of the set of discrete inflections points $\mI$ used in Proposition \ref{P:one-way}.}
    \label{fig:i-fig}
\end{figure}
\[
\mI:=\set{i\in \set{2,\ldots, m-2}~|~\epsilon_{i}\neq \epsilon_{i+1}}\cup\set{1,m-1} = \set{i_1=1<i_2<\cdots< i_{\abs{\mI}-1}<i_{\abs{\mI}}=m-1}.
\]
By construction, for each $q=1,\ldots,\abs{\mI}-1$ on the intervals $(x_{i_q},x_{i_q+1}),\ldots, (x_{i_{q+1}}, x_{i_{q+1}+1})$ the sequence of slopes $s_{i_q},\ldots, s_{i_{q+1}}$ of $f_{\mD}$ is either non-increasing or non-decreasing. Hence,
\[
\sum_{j={i_q}}^{i_{q+1}-1} \abs{s_{j}-s_{j+1}} = \abs{s_{i_q}-s_{i_{q+1}}}
\]
and we find
\begin{equation}\label{E:Df-red}
\TV{Df_{\mD}} =\sum_{i=1}^{m-1}\abs{s_{i}-s_{i+1}}=\sum_{q=2}^{\abs{\mI}} \abs{s_{i_q}-s_{i_{q-1}}}.    
\end{equation}
The key observation is 
\begin{equation}\label{E:f-tv}
f\in \PLD \text{ satisfies (1) and (2)}\qquad \Longrightarrow\qquad \TV{Df} = \TV{Df_{\mD}} = \sum_{q=2}^{\abs{\mI}} \abs{s_{i_q}-s_{i_{q-1}}}.     
\end{equation}
Indeed, by property (2), the function $f$ is either convex or concave on any interval of the form $(x_{i_q},x_{i_{q+1}+1})$. Therefore, $Df$ is monotone on any such interval. Thus, we find that
\[
\TV{Df} =\sum_{q=2}^{\abs{\mI}} \abs{s_{\mathrm{out}}(f,x_{i_q})-s_{\mathrm{out}}(f,x_{i_{q-1}})}.
\]
But property (1) guarantees that 
\[
s_{\mathrm{out}}(f,x_{i_q})= s_{i_q}
\]
and for all $q=1,\ldots, \abs{\mI}$, proving \eqref{E:f-tv}. The proof of Proposition \ref{P:one-way} therefore follows from the following result, which was already observed in Theorem 3.3 of \cite{savarese2019infinite}.

\begin{lemma}\label{L:ridgeless-pl} We have
\begin{equation}
    \mathrm{RidgelessReLU}(\mD) = \set{ f \in \mathrm{PL}(\mD) ~\bigg|~ \norm{Df}_{TV} = \norm{Df_{\mD}}_{TV}}
\end{equation}
\end{lemma}
\begin{proof}
Consider any $f\in \RR$. We seek to show that $\TV{Df}\geq \TV{Df_{\mD}}$. Note that for any sequence of points $\xi_1< \cdots < \xi_k$ at which $Df(\xi_j)$ exists, we have
\[
\TV{Df}\geq \sum_{j=1}^{k-1} \abs{Df(\xi_{j+1})-Df(\xi_j)}.
\]
We will now exhibit a set of points where the right hand side equals $\TV{Df_{\mD}}$. To begin, note that by Proposition \ref{P:ends} we have $f(x)=f_{\mD}(x)$ for $x<x_2$ and $x>x_{m-1}$. For all $\xi_{i_1}\in (x_1,x_2)=(x_{i_1},x_{i_1+1})$ and $\xi_{i_{\abs{\mI}}}\in (x_{m-1},x_m)=(x_{i_{\abs{\mI}-1}},x_{i_{\abs{\mI}}})$ we thus have
\[
Df(\xi_{i_1})=s_1,\qquad Df(\xi_{i_{\abs{\mI}}})=s_m.
\]
Further, for any $i=2,\ldots, m-1$ on any interval $(x_i,x_{i+1})$, there exist $\xi_{i,\pm}$ such that $Df(\xi_{i,\pm})$ exist and 
\[
Df(\xi_{i,+}) \geq s_i,\qquad Df(\xi_{i,-})\leq s_i.
\]
In particular, for $q=2,\ldots,\abs{\mI}-1$ we may find $\xi_{i_q}$ satisfying
\[
\xi_{i_q}\in (x_{i_{q}},x_{i_q+1}),\quad \mathrm{sgn} (s_{i_q}-Df(\xi_{i_q}) )= \epsilon_{i_{q+1}}.
\]
As we saw just before this Lemma, for each $i=1,\ldots, \abs{\mI}-1$ we have
\[
\mathrm{sgn}\lr{s_{i_{q+1}}-s_{i_q}}=\epsilon_{i_{q+1}}.
\]
Hence, for each $q=1,\ldots, \abs{\mI} -1$ we conclude
\[
\abs{Df(\xi_{i_q}) -Df(\xi_{i_{q+1}})}\geq \abs{s_{i_q}-s_{i_{q+1}}}.
\]
Thus, 
\[
\TV{Df}\geq \sum_{q=1}^{\abs{\mI}-1} \abs{s_{i_q}-s_{i_{q+1}}} = \TV{Df_{\mD}},
\]
as desired.
\end{proof}

\end{proof}

\bibliography{hanin-bib}{}
\bibliographystyle{alpha}

\end{document}